\documentclass{article}

\PassOptionsToPackage{numbers,compress}{natbib}

\usepackage[main,final]{neurips_2026}

\usepackage[utf8]{inputenc}
\usepackage[T1]{fontenc}

\usepackage{microtype}
\usepackage{graphicx}
\usepackage{subcaption}
\usepackage{booktabs}
\usepackage{multirow}
\usepackage{hyperref}
\usepackage{url}
\usepackage{xcolor}
\usepackage{float}
\usepackage{comment}
\usepackage{amsmath}
\usepackage{amssymb}
\usepackage{amsfonts}
\usepackage{mathtools}
\usepackage{amsthm}

\usepackage[capitalize,noabbrev]{cleveref}
\usepackage{enumitem}
\setlist[itemize]{noitemsep, topsep=0pt}

\usepackage{etoolbox}
\BeforeBeginEnvironment{proof}{\vspace{-\baselineskip}}
\AfterEndEnvironment{proof}{\vspace{-\baselineskip}}
\theoremstyle{plain}
\newtheorem{theorem}{Theorem}[section]
\newtheorem{proposition}[theorem]{Proposition}

\theoremstyle{definition}

\theoremstyle{remark}
\newtheorem{remark}[theorem]{Remark}

\newcommand{\R}{\mathbb{R}}
\newcommand{\N}{\mathbb{N}}
\newcommand{\K}{\mathbb{K}}
\newcommand{\C}{\mathbb{C}}
\DeclareMathOperator{\softplus}{softplus}
\newcommand{\pp}{\,pp}
\usepackage{tabularx}
\newcommand{\accbar}[3][green!30]{\rlap{\textcolor{#1}{\rule[-0.35ex]{#2}{1.8ex}}}\hspace{0.15em}#3}

\title{LogSig-SSM: Time-Series Modelling with Multi-Scale Log-Signature Compression for State-Space Models}

\author{%
  Felix Oury \\
  Department of Computing \\
  Imperial College London \\
  \texttt{felix.oury19@imperial.ac.uk}
  \And
  Nicolas Calvo Peiro \\
  Department of Bioengineering \\
  Imperial College London \\
  \texttt{nicolas.calvo-peiro19@imperial.ac.uk}
  \And
  Reiko J. Tanaka \\
  Department of Bioengineering \\
  Imperial College London \\
  \texttt{r.tanaka@imperial.ac.uk}
}

\begin{document}

\maketitle

\begin{abstract}
Time-series data are often sampled irregularly at high frequencies and exhibit long-range dependencies, which makes long-horizon modelling difficult. Continuous-time models such as neural controlled differential equations (NCDEs) and neural rough differential equations (NRDEs) can handle irregular sampling, but they scale poorly to long sequences. Selective state-space models (SSMs) such as Mamba scale linearly with sequence length, but they provide limited recurrent mixing across hidden dimensions within a single block. We propose LogSig-SSM (Log-Signature Compression for State-Space Models), which first compresses long multivariate time series into a shorter sequence of tokens using multi-scale windowed log-signatures, and then processes these tokens with a selective SSM backbone. LogSig-SSM is scalable and robust to irregular sampling, combining log-signature tokens that capture higher-order cross-channel interactions with a selective SSM that models long-range dependencies. The model also admits a continuous-time interpretation as an NCDE/NRDE-style system driven by a log-signature-based input, in which selectivity induces an input-dependent rescaling of the latent dynamics. Across four benchmarks, namely long-sequence classification on UEA, high-frequency physiological regression on PPG-DaLiA, multivariate weather forecasting, and irregularly sampled clinical prediction on PhysioNet Sepsis, LogSig-SSM outperforms or matches strong SSM and continuous-time baselines
while training up to $30\times$ faster and using up to $37\times$ less GPU memory than Mamba on the longest sequences.\footnote{Code: \url{https://github.com/felix-oury/LogSigSSM}. }

\end{abstract}

\section{Introduction}
\label{sec:introduction}

Time-series modelling has numerous real-world applications in domains such as healthcare \cite{dunn_wearable_2021} and finance \cite{hautsch_modelling_2004}. These applications typically involve signals, such as ECG or transaction prices, that are irregularly sampled at high frequencies and contain information spanning long time scales. Processing these long sequences creates memory and computational bottlenecks which motivate the development of models that scale to long sequences while remaining reliable under irregular sampling rates. Two widely-used approaches are continuous-time models and State-Space-Models (SSMs).

Continuous-time models such as Neural Ordinary Differential Equations (NODEs) \cite{chen2019neuralordinarydifferentialequations} and Neural Controlled Differential Equations (NCDEs) \cite{kidger_neural_2020} treat observed time-series data as discrete-time samples from an underlying continuous-time process. NCDEs operate on continuous interpolations of the data and naturally accommodate irregularly-sampled observations.
However, training and inference can become costly on longer sequences because numerical solvers often require more integration steps. To address this computational bottleneck, Neural Rough Differential Equations (NRDEs) \cite{morrill_neural_2021} and Log-NCDEs \cite{walker_log_2024} 
employ the log-ODE method \cite{chevyrev_primer_2025,friz_multidimensional_2010} which summarises information between time intervals with truncated log-signatures, enabling the solver to take fewer, larger steps.
Signature-based summarisation has also been adapted to improve computational efficiency and handle irregular sampling in discrete models such as RNNs \cite{Logsigrnn} and transformers \cite{moreno-pino_rough_2024}.

In parallel, structured SSMs \cite{boyer_learning_2025,dao_transformers_2024,gu_mamba_2024,gu_efficiently_2022,rusch_oscillatory_2025,smith_simplified_2023} have emerged as an efficient alternative to attention-based models. Unlike transformers, whose computational cost scales quadratically \cite{DBLP:journals/corr/VaswaniSPUJGKP17}, modern SSMs achieve linear-time processing, constant-memory inference, and efficient GPU-parallel training. SSMs have also shown competitive performance across various sequence modelling tasks~\cite{lieber2024jambahybridtransformermambalanguage,schiff2024caduceusbidirectionalequivariantlongrange, Shams_2024,zuo2024falconmambacompetitiveattentionfree}. Recent selective SSMs, such as Mamba \cite{gu_mamba_2024}, introduced input-dependent gating enabling the model to filter context content rather than uniformly compressing it.
Additionally, novel theoretical work \cite{cirone_theoretical_2025} has established that SSMs admit a continuous-time interpretation as linear controlled differential equations (CDEs), connecting them to the continuous-time modelling framework.

However, recent empirical studies highlight two key limitations of Mamba. First, performance can degrade on very long sequences, where Mamba-based models may underperform attention-based baselines \cite{waleffe2024empiricalstudymambabasedlanguage}. Existing approaches mitigate this limitation through token pruning or decimation \cite{ye2025longmambaenhancingmambaslong,benkish2025decimambaexploringlengthextrapolation}. Second, Mamba's diagonal state transition admits no within-block mixing across hidden dimensions. As \citet[Cor 4.4]{cirone_theoretical_2025} show, the closure of a single Mamba block decomposes as a sum of per-channel functionals. This means genuinely multivariate path functionals, such as Lévy areas, which depend on the joint behaviour of two channels, lie outside this closure. Two mitigations exist for this limitation, but come with costs. Stacking blocks with linear inter-block mixing eventually recovers full expressivity (Prop. 4.5 in \cite{cirone_theoretical_2025}), but only as both depth and hidden-state size grow without bound, i.e. cross-channel structure is paid for in model size. Replacing the diagonal state transition with richer structure \cite{walker2025structuredlinearcdesmaximally,terzic2025structuredsparsetransitionmatrices} avoids the depth cost, but at non-trivial efficiency overhead (PD-SSM reports approximately $6\times$ slower runtime than Mamba \cite{terzic2025structuredsparsetransitionmatrices}). These limitations motivate a complementary strategy: compressing the effective sequence length while injecting explicit higher-order cross-channel information at the input, leaving the efficient diagonal scan intact. 

Log-signature features \cite{lyons_differential_2007} provide exactly such input-level functionals. They encode iterated integrals of the path, including Lévy areas, while inheriting invariance to monotone time reparameterisations and thus offering robustness to irregular sampling. Building on this idea, we introduce LogSig-SSM, a two-stage architecture that achieves both accuracy and efficiency on long sequences. First, a long multivariate sequence is mapped to a shorter token sequence by computing truncated log-signatures over strided windows at multiple scales, yielding compact descriptors that explicitly capture cross-channel dependencies. Second, these tokens are processed by a diagonal selective SSM, which efficiently models long-range dependencies over the reduced sequence length.

Our contributions are:
\begin{itemize}
    \item We introduce LogSig-SSM, a model that integrates log-signature tokenisation with selective SSMs. The tokeniser compresses long, irregularly-sampled multivariate sequences into compact summaries by computing truncated log-signatures over multi-scale windows.
    \item We provide a continuous-time interpretation of LogSig-SSM by extending the linear-CDE view of SSMs \cite{cirone_theoretical_2025}, interpreting the model as an NCDE/NRDE-style system driven by log-signature-derived controls with input-dependent time rescaling of the latent dynamics.
    \item We establish a depth--truncation tradeoff. A single LogSig-SSM block followed by an MLP can approximate any continuous functional of the windowed log-signature up to depth $m$, while the same class of functionals lies outside the closure of a single diagonal Mamba block (even with a pointwise MLP, which cannot recover features absent from the block state) and requires chained blocks to recover.
    \item We evaluate LogSig-SSM on four benchmarks: long-sequence time-series classification (UEA), high-frequency physiological signal regression (PPG-DaLiA), multivariate weather forecasting, and irregularly sampled early sepsis prediction (PhysioNet Sepsis). LogSig-SSM matches or exceeds strong SSM and continuous-time baselines 
    and reduces training time by $30\times$ and GPU memory by $37\times$ relative to standard Mamba on the longest sequences. 
\end{itemize}
\section{Theoretical Background}
\label{sec:background}

We review the theoretical foundations of LogSig-SSM: (i) structured SSMs, which serve as the sequential processing backbone, (ii) their continuous-time interpretation as linear CDEs, and (iii) log-signatures, which provide compact summaries of multivariate time-series over time intervals.

\subsection{Structured State-Space Models}
\label{sec:ssm_background}

Modern SSM sequence layers can be viewed as discretisations of a continuous-time linear state-space system.
Let $X:[0,1]\to\R^{d}$ be an input signal observed at time points $0\le t_0<\cdots<t_L\le 1$.
We write $x_\ell := X(t_\ell)$ for the resulting discrete sequence and $x_{i,\ell}:=X_i(t_\ell)$ for its $i$th channel. A channel-wise linear SSM processes each channel $X_i$ with its own latent state $H_i:[0,1]\to\K^{N}$ and produces a scalar output $y_i:[0,1]\to\R$:
\begin{equation}
\begin{gathered}
\frac{dH_i(t)}{dt} = A_i H_i(t) + B_i\,X_i(t),
\quad
y_i(t) = C_i H_i(t),
\end{gathered}
\label{eq:ssm_ct_channelwise}
\end{equation}
where $A_i\in\K^{N\times N}$ is structured (often diagonal), $B_i\in\K^{N}$, $C_i\in\K^{1\times N}$, and $\K\in\{\R,\C\}$. To apply SSMs to discrete sequence data, we discretise the continuous-time dynamics using a step size $\Delta_i>0$.
Under a zero-order hold (ZOH) discretisation with step $\Delta_i$:
\begin{equation}
H_{i,\ell} = \overline A_i H_{i,\ell-1} + \overline B_i x_{i,\ell},\quad
\overline A_i = e^{\Delta_i A_i},\quad
\overline B_i = (\Delta_i A_i)^{-1}(e^{\Delta_i A_i}-I)\Delta_i B_i.
\label{eq:ssm_dt_channelwise}
\end{equation}

A key design choice in modern SSMs is how the continuous-dynamics are discretised. Linear Time-Invariant (LTI) SSMs, such as S4 \cite{gu_efficiently_2022} and S5 \cite{smith_simplified_2023}, use a fixed step size $\Delta_i$ learned at training time, giving time-invariant operators $(\overline A_i, \overline B_i)$. Selective SSMs, such as Mamba \cite{gu_mamba_2024}, instead make the step size input-dependent. This induces input-dependent selectivity, meaning the model can adapt its update rate across the sequence, focusing on important information while down-weighting irrelevant inputs. In addition, the input and output projections $s_B:\R^{d}\to\R^{N}$ and $s_C:\R^{d}\to\R^{1\times N}$ are learned functions of the current sample $x_\ell$.
In Mamba, the channel-wise internal step is defined as
\begin{equation}
\Delta_{i,\ell} = \mathrm{softplus}(\alpha_i^\top x_\ell + \beta_i),\qquad \alpha_i\in\R^d,\ \beta_i\in\R,
\label{eq:delta_map_bg}
\end{equation}
yielding input-dependent operators,
\begin{equation}
\overline{A}_{i,\ell}=\exp(\Delta_{i,\ell}A_i),
\quad
\overline{B}_{i,\ell} = (\Delta_{i,\ell}A_i)^{-1}\!\bigl(\exp(\Delta_{i,\ell}A_i)-I\bigr)\,\Delta_{i,\ell}\,s_B(x_\ell),
\quad
\overline{C}_{i,\ell}=s_C(x_\ell).
\label{eq:Bbar_zoh_bg}
\end{equation}
For small $\Delta_{i,\ell}$, $\overline{B}_{i,\ell}\approx\Delta_{i,\ell}s_B(x_\ell)$, so selectivity acts as input-dependent scaling of the state update.

\subsection{Unifying View: Linear CDEs and SSMs}
\label{sec:ssm_cde_bridge}

SSM layers admit a continuous-time interpretation as linear CDEs ~\cite{cirone_theoretical_2025}, a connection we extend in Section~\ref{sec:sig_ssm_continuous} to derive LogSig-SSM's continuous-time form. 
Let $H_t \in \R^{N}$ denote the hidden state and let $\omega: [0,1] \to \R^{d_\omega}$ and $\xi: [0,1] \to \R^{d_\xi}$ be control paths. We consider the linear CDE
\begin{equation}
dH_t \;=\; \sum_{i=1}^{d_\omega} A_i H_t \, d\omega_t^{i} \;+\; W_B\, d\xi_t,
\qquad
H_0 \in \R^{N},
\label{eq:ssm_linear_cde}
\end{equation}
for learnable matrices $A_i \in \R^{N \times N}$ and $W_B \in \R^{N \times d_\xi}$\footnote{
To avoid overloading notation, we use $W_B$ for the time-invariant injection matrix in the linear CDE view (denoted $B$ in \cite{cirone_theoretical_2025}).
We reserve $s_B(\cdot)$ for Mamba's input-dependent projection in selective scans, written as $B$ in \cite{gu_mamba_2024}.
}. Selective SSMs correspond to input-dependent driving paths $(\omega^X, \xi^X)$. Viewed in this continuous-time framework, selectivity can be interpreted as an input-dependent adaptive time rescaling: the model dynamically adjusts how quickly its latent state evolves based on the characteristics of the input \cite{cirone_theoretical_2025}.

\subsection{Path Signatures and Log-Signatures}
\label{sec:sig_background}

The signature transform represents a path through its hierarchy of iterated integrals \cite{lyons_differential_2007, chevyrev_primer_2025}.  Formally, let $X \in C^{1}([0, 1];\R^{d})$ be a path and fix $0 \le a < b \le 1$. The path signature of $X$ over $[a,b]$ is the (infinite) collection of iterated integrals:
\begin{equation}
\begin{gathered}
\mathrm{Sig}(X)_{a,b} := \Big(1, \mathbf{X}^{1}_{a,b}, \mathbf{X}^{2}_{a,b}, \ldots\Big), \quad
\mathbf{X}^{m}_{a,b} := \int_{a < u_1 < \cdots < u_m < b} dX_{u_1} \otimes \cdots \otimes dX_{u_m},
\end{gathered}
\label{eq:sig_def_bg}
\end{equation}
where $\mathbf{X}^{m}_{a,b} \in (\R^d)^{\otimes m}$.
In practice, we truncate the signature at depth $M$ and write $\mathrm{Sig}_M(X)_{a,b}:=\bigl(1,\mathbf{X}^1_{a,b},\ldots,\mathbf{X}^M_{a,b}\bigr)$. The signature is invariant under monotone time reparameterisations \cite{lyons_differential_2007}. If $\gamma: [0, 1] \to [a,b]$ is a continuous, non-decreasing, surjective reparameterisation, then
\begin{equation}
\mathrm{Sig}(X)_{a,b} = \mathrm{Sig}(X\circ\gamma)_{0,1}.
\label{eq:sig_time_reparam_invariance}
\end{equation}
This ensures the signature depends on the geometric path traced over $[a,b]$ rather than the sampling density, providing natural robustness to irregular sampling once a path interpolation is fixed.

The log-signature is the tensor-algebra logarithm $\mathrm{LogSig}(X)_{a,b} := \log(\mathrm{Sig}(X)_{a,b})$, which lies in the free Lie algebra and, after choosing a Lie basis, provides a compact coordinate representation of the same information. Its truncation inherits the same reparameterisation invariance:
\begin{equation}
\mathrm{LogSig}_{M}(X)_{a,b} = \mathrm{LogSig}_{M}(X \circ \gamma)_{0,1}.
\label{eq:logsig_time_reparam_invariance}
\end{equation}
In LogSig-SSM (Section~\ref{sec:sig_ssm_discrete}), we compute truncated log-signatures on overlapping windows at multiple scales to form a compressed token stream.
\section{LogSig-SSM Methodology}
\label{sec:sig_ssm_framework}

\begin{figure*}
  \centering
  \includegraphics[width=1\textwidth]{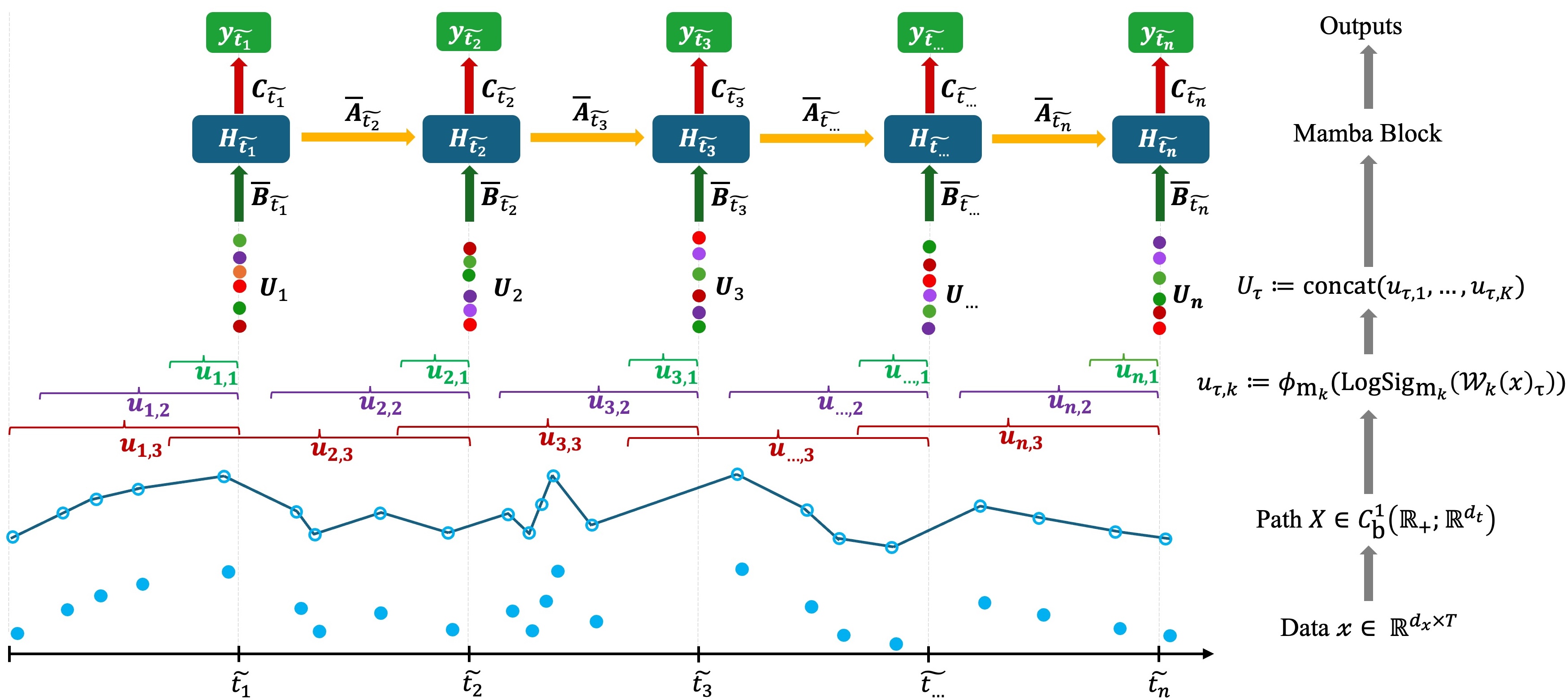}
  \caption{Discrete-time LogSig-SSM pipeline. A multivariate time series $x\in\R^{d_x\times T}$ is summarised at strided anchors $\tilde t_\tau:=t_{\tau s}$. For each scale $k$, a windowed segment $\mathcal{W}_k(x)_\tau$ is extracted, and its truncated log-signature coordinate vector, $u_{\tau,k} := \phi_{m_k}\!\Big(\mathrm{LogSig}_{m_k}\!\big(\mathcal{W}_k(x)_\tau\big)\Big)\in \R^{D_k},$ is computed. These are concatenated across scales to form the token $U_\tau := \mathrm{concat}\big(u_{\tau,1},\dots,u_{\tau,K}\big)\in\R^{D_{\mathrm{tot}}}$. Finally, $U_\tau$ is projected to the scan width $d$ via a learned linear embedding to form $\widetilde U_\tau = W_{\mathrm{in}}U_\tau$, which serves as the input to the Mamba block.}
  \label{fig:sigssm_discrete_overview}
\end{figure*}

We introduce LogSig-SSM, a selective SSM driven by log-signature tokenisation. LogSig-SSM (i) compresses a multivariate input sequence into a sequence of interval descriptors using multi-scale windowed log-signatures, and
(ii) processes the resulting token stream with a selective SSM backbone for efficient long-range propagation. Figure~\ref{fig:sigssm_discrete_overview} summarises the discrete-time implementation.

\subsection{Discrete-Time Implementation of LogSig-SSM}
\label{sec:sig_ssm_discrete}

Let $x\in\R^{d_x\times T}$ be a multivariate time-series with samples $\{x_j\}_{j=0}^{T-1}$ observed at times
$0=t_0<\cdots<t_{T-1}$ (not necessarily uniformly spaced).
Fix a stride $s\in\N$ and define anchor indices $\tau\in\{1,\dots,T'\}$ with
$T':=\lfloor \frac{T-1}{s}\rfloor$ and anchor times $\tilde t_\tau:=t_{\tau s}$.
Fix $K\in\N$ window scales, specified by hyperparameters $\{L_k\}_{k=1}^K$ (window lengths) and
$\{m_k\}_{k=1}^K$ (log-signature truncation depths), where the index $k$ enumerates scales.
For each scale $k$ and anchor $\tau$, let $W_{k,\tau}$ denote the windowed samples,
$
W_{k,\tau} := \{(t_j,x_j):\ \max(0,\tilde t_\tau-L_k)\le t_j \le \tilde t_\tau\}.
$
Let $\mathcal{W}_k(x)_\tau:[\max(0,\tilde t_\tau-L_k),\tilde t_\tau]\to\R^{d_x}$ be the bounded-variation
path obtained by piecewise-linear interpolation of $W_{k,\tau}$. Define its truncated log-signature by
$
\mathrm{LogSig}_{m_k}\!\big(\mathcal{W}_k(x)_\tau\big)\in \mathfrak{g}^{(\le m_k)}(\R^{d_x}),
$
where $\mathfrak{g}^{(\le m)}(\R^{d_x})$ is the step-$m$ truncated free Lie algebra over $\R^{d_x}$.
We tokenise with the log-signature rather than the signature. The two are related by a polynomial bijection and carry the same information at equal depth (Section~\ref{sec:sig_background}). The log-signature is the redundancy-free coordinate system for that content and the driver of the log-ODE method underlying the NRDE and Log-NCDE models we build on.
In implementation the log-signature is represented by its coordinate vector in a fixed basis: let $\phi_{m_k}:\mathfrak{g}^{(\le m_k)}(\R^{d_x})\to\R^{D_k}$ be the corresponding coordinate map, and define
\[
u_{\tau,k} := \phi_{m_k}\!\Big(\mathrm{LogSig}_{m_k}\!\big(\mathcal{W}_k(x)_\tau\big)\Big)\in\R^{D_k}.
\]
Stacking across $K$ scales yields the LogSig-SSM token
\begin{equation}
\begin{gathered}
U_\tau := \mathrm{concat}\big(u_{\tau,1},\dots,u_{\tau,K}\big)\in\R^{D_{\mathrm{tot}}},
\qquad
D_{\mathrm{tot}}:=\sum_{k=1}^K D_k .
\label{eq:sigssm_token}
\end{gathered}
\end{equation}

The model dimension (scan width) $d$ is a design choice and may differ from $D_{\mathrm{tot}}$.
Accordingly, the scan input is obtained by applying a learned linear embedding to each token,
\begin{equation}
\widetilde U_\tau := W_{\mathrm{in}}\,U_\tau \in \R^{d},
\qquad
W_{\mathrm{in}}\in\R^{d\times D_{\mathrm{tot}}}.
\label{eq:sigssm_token_embed}
\end{equation}

The resulting sequence $\widetilde U_{1:T'}$ is viewed as $d$ channels, indexed by $i\in\{1,\dots,d\}$.
The token stream is then processed by a stack of Mamba blocks \cite{gu_mamba_2024}, each governed by the selective scan from Section~\ref{sec:ssm_background}.
The discretisation step $\Delta_{i,\tau}\in\R_{+}$, the input projection $s_B(\widetilde U_\tau)\in\R^{N}$, and the output projection $s_C(\widetilde U_\tau)\in\R^{1\times N}$ are all functions of the current input $\widetilde U_\tau\in\R^{d}$, where $N$ denotes the state size.

Referencing the ZOH recurrence derived in \eqref{eq:ssm_dt_channelwise}, the per-channel state $H_{i,\tau}$ evolves by
\begin{equation}
\begin{gathered}
H_{i,\tau} = \overline A_{i,\tau}\,H_{i,\tau-1} + \overline B_{i,\tau}\,\widetilde U_{i,\tau}, \qquad
y_{i,\tau} = s_C(\widetilde U_\tau)\, H_{i,\tau},
\end{gathered}
\label{eq:sigssm_ssm_form}
\end{equation}
where $\overline A_{i,\tau} = \exp(\Delta_{i,\tau}A_i)$.
Using the small-step approximation from Section~\ref{sec:ssm_background}, the driving term simplifies to:
$
\overline B_{i,\tau}\,\widetilde U_{i,\tau} \;\approx\; \Delta_{i,\tau}\,s_B(\widetilde U_\tau)\,\widetilde U_{i,\tau}.
$

\subsection{Continuous-time Interpretation of LogSig-SSM}
\subsubsection{LogSig-SSM as a Linear CDE}
\label{sec:sig_ssm_continuous}
To obtain a continuous-time interpretation, we adopt the small-step approximation (Section~\ref{sec:ssm_background}) standard in selective SSM analysis \cite{cirone_theoretical_2025}.
Consider the sequence of tokens $\{U_\tau\}_{\tau=1}^{T'}$ from \eqref{eq:sigssm_token}. We define a piecewise-constant control path $U:[0,1]\to\R^{D_{\mathrm{tot}}}$ by choosing a partition
$0=r_0<\cdots<r_{T'}=1$ and setting:
\begin{equation}
U(t) = U_\tau, \qquad t \in [r_{\tau-1}, r_\tau) \quad \text{for } \tau=1,\dots,T'.
\label{eq:sigssm_U_path}
\end{equation}
The effective driving signal is then the projection $\widetilde U(t) := W_{\mathrm{in}}\,U(t) \in \R^{d}$.
As shown in Appendix~\ref{app:linear_cde_derivation},
LogSig-SSM admits the Linear CDE representation
\begin{equation}
\begin{gathered}
dH^{\widetilde U}_{i;t}=A_i H^{\widetilde U}_{i;t}\,d\omega^{\widetilde U}_{i;t} +W_B\,d\xi^{\widetilde U}_{i;t}, \qquad
H^{\widetilde U}_{i;0}=h_{i,0},
\qquad i=1,\dots,d,
\end{gathered}
\label{eq:sigssm_linear_cde}
\end{equation}
where $W_B\in\R^{N\times d}$ is the injection matrix, consistent with the discrete-time projection $s_B(u)=W_B\,u$.
The system is driven by \begin{equation}
\begin{gathered}   
\omega^{\widetilde U}_{i;t}
:=
\int_0^t \sigma(\widetilde\alpha_i^\top \widetilde U_s+\widetilde\beta_i)\,ds
\in\R, \qquad
\xi^{\widetilde U}_{i;t}
:=
\int_0^t \widetilde U_s\,\widetilde U_s^i\,\sigma(\widetilde\alpha_i^\top \widetilde U_s+\widetilde\beta_i)\,ds
\in\R^{d},
\end{gathered}
\label{eq:sigssm_drivers_mamba_style}
\end{equation}
for parameters $\widetilde\alpha_i\in\R^{d}$ and $\widetilde\beta_i\in\R$.
Since $\widetilde U(t)=W_{\mathrm{in}}\,U(t)$ and $U$ is constructed from multi-scale windowed log-signatures of the original path,
the drivers remain log-signature-induced functionals of the original trajectory.

\subsubsection{Relation to Neural RDEs: Interval-Driven Dynamics}
Under the ZOH extension \eqref{eq:sigssm_U_path}, the drivers $(\omega^{\widetilde U}_i,\xi^{\widetilde U}_i)$ are affine on each interval with constant derivatives since $\widetilde U_s=\widetilde U_\tau$ for $s\in[r_{\tau-1},r_\tau)$. Summarising each interval by the driver increments $\Lambda^{\widetilde U}_{i,\tau}:=(\Delta\omega^{\widetilde U}_{i,\tau},\,\Delta\xi^{\widetilde U}_{i,\tau})^\top$, equation \eqref{eq:sigssm_linear_cde} reduces to the NRDE-style integral form for $t\in(r_{\tau-1},r_\tau)$:
\begin{equation}
H^{\widetilde U}_{i;t}
=
H^{\widetilde U}_{i;r_{\tau-1}}
+
\int_{r_{\tau-1}}^{t}
\big[\,A_i H^{\widetilde U}_{i;s},\ W_B\,\big]\;
\frac{1}{\Delta r_\tau}\Lambda^{\widetilde U}_{i,\tau}\;ds.
\label{eq:sigssm_nrde_interval_integral_33}
\end{equation}
In particular, $\Delta\omega^{\widetilde U}_{i,\tau}=\sigma(\widetilde\alpha_i^\top\widetilde U_\tau+\widetilde\beta_i)\Delta r_\tau$ acts as a content-dependent internal-time allocation per interval. LogSig-SSM's construction is inspired by NRDEs \cite{morrill_neural_2021} but introduces two key differences. First, NRDEs use the log-signature of the raw path over a single window per interval. LogSig-SSM instead uses a multi-scale token $U_\tau$ concatenating log-signatures at $K$ different window lengths, providing descriptors at complementary resolutions within a single token. Second, the drivers $\Lambda^{\widetilde U}_{i,\tau}$ are content-dependent, unlike the fixed interval parametrisation in NRDEs. Therefore, LogSig-SSM learns to allocate internal time selectively across the compressed sequence. The derivation of \eqref{eq:sigssm_nrde_interval_integral_33} from \eqref{eq:sigssm_linear_cde} is provided in Appendix~\ref{app:nrde_interval_derivation}.
\section{Theoretical Analysis}
\label{sec:theoretical_analysis}
We highlight three theoretical attributes of LogSig-SSM. First, we establish a depth-truncation tradeoff by showing that injecting cross-channel path geometry directly into the tokens allows LogSig-SSM to expose non-commutative multivariate features in a single diagonal block that would otherwise require deep, chained Mamba blocks \cite{cirone_theoretical_2025}. Second, tokenisation reduces the effective sequence length, cutting selective scan complexity by a factor of $T/T'$ relative to Mamba. Third, LogSig-SSM inherits within-window time-reparameterisation invariance from the log-signature transform, providing robustness to irregular sampling.
\subsection{Single-layer expressivity via a log-signature driver lift}
\label{sec:th_expressivity}
A single diagonal Mamba block cannot represent non-commutative
cross-channel iterated integrals of its input path; recovering them
requires chained Mamba blocks (Cor.~4.4 and Sec.~C.2.1 of
\citet{cirone_theoretical_2025}). LogSig-SSM bypasses this depth cost
by lifting the input before the scan. By construction, the token
$U_\tau$ contains every log-signature coordinate of the windowed path
$\mathcal{W}_k(X)_\tau$ up to depth $m_k$. The following proposition
formalises the resulting depth--truncation tradeoff.

\begin{proposition}[Depth--truncation tradeoff; Proof in App.~\ref{app:expressivity_proof}]
\label{prop:th_mixing_tradeoff}
Fix a compact family
$\mathcal{X}\subset C^{1}_{b}([0,T_{\mathrm{end}}];\mathbb{R}^{d_x})$
with $d_x\geq 2$, a window scale $(L_k, m_k)$, an anchor $\tau$,
and $2\leq m\leq m_k$. Let $F:\mathcal{X}\to\mathbb{R}$ be a
continuous functional that factors as
$F(X) = f\!\left(\mathrm{LogSig}_m(\mathcal{W}_k(X)_\tau)\right)$
for some continuous $f$. Then:
\textbf{(i)} for every $\varepsilon>0$, a single LogSig-SSM block
followed by a post-scan MLP approximates $F$ uniformly on
$\mathcal{X}$ to error~$\varepsilon$;
\textbf{(ii)} there exist such $F$ with $F\notin
\mathcal{F}^{(1)}_{\mathrm{Mamba}}[\mathcal{X}]$, where
$\mathcal{F}^{(1)}_{\mathrm{Mamba}}[\mathcal{X}]$ denotes the
uniform closure on $\mathcal{X}$ of functionals computable by a
single diagonal Mamba block on the raw input path
(Cor.~4.4 of \cite{cirone_theoretical_2025});
\textbf{(iii)} any raw-input Mamba approximation of such $F$
requires chained diagonal blocks; recovering depth-$m$
cross-channel features in particular requires chains of depth
growing with $m$ in the construction of
\citet{cirone_theoretical_2025} (Sec.~C.2.1, Prop.~4.5),
whereas LogSig-SSM realises $F$ in a single block whenever
$m_k \geq m$.
\end{proposition}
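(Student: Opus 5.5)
For part (i), the plan is to show that the LogSig-SSM block can pass the token $U_\tau$ through to its output at anchor $\tau$ up to arbitrarily small error, and then let the post-scan MLP learn $f$. First, the map $X\mapsto \mathrm{LogSig}_m(\mathcal{W}_k(X)_\tau)$ is continuous on $\mathcal{X}$. The windowed path depends continuously on $X$ in the $C^1_b$ (hence bounded-variation) topology, and the truncated log-signature is continuous in $1$-variation. So the image $\mathcal{K}:=\{\phi_m(\mathrm{LogSig}_m(\mathcal{W}_k(X)_\tau)) : X\in\mathcal{X}\}$ is compact. Since $m\le m_k$, these depth-$m$ coordinates are obtained from $u_{\tau,k}$ by projecting onto graded components of degree at most $m$, which is a linear map. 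I would therefore choose $W_{\mathrm{in}}$ to select these coordinates. The scan is then configured to be near-identity at the current step: take $A_i=-\lambda I$ with $\lambda$ large and $\Delta_{i,\tau}$ bounded below on the compact image (fix $\alpha_i=0$ and choose $\beta_i$). Then $\overline A_{i,\tau}=e^{-\lambda\Delta}\to 0$ and the past state is forgotten. With constant $s_B,s_C$ (zero weights, bias) and suitable normalisation, the output satisfies $y_{i,\tau}\approx c\,\widetilde U_{i,\tau}$, uniformly on $\mathcal{K}$. The contribution of earlier tokens is bounded by $e^{-\lambda\Delta_{\min}}$ times a uniform bound on the tokens, which is finite by compactness of $\mathcal{X}$.

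Next, the post-scan MLP approximates the continuous function $g:=f\circ(\text{coordinates}^{-1})/c$ on a compact neighbourhood of $\mathcal{K}$; this uses the universal approximation theorem. Uniform continuity of the MLP on that neighbourhood absorbs the scan error. An $\varepsilon/2+\varepsilon/2$ argument then gives (i). If the residual/skip connection of a Mamba block is available, I would use it instead of the forgetting argument, which makes the pass-through exact.

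For part (ii), I would exhibit a specific $F$. Take $m=2$, two channels $1,2$ of $X$, and let $F(X)$ be the Lévy area $\mathbb{A}^{12}$ of $\mathcal{W}_k(X)_\tau$, i.e. $f$ is the coordinate projection onto $[e_1,e_2]$. By Cor.~4.4 of \citet{cirone_theoretical_2025}, every element of $\mathcal{F}^{(1)}_{\mathrm{Mamba}}[\mathcal{X}]$ is a sum $\sum_i G_i$, where each $G_i$ depends on the input only through functionals built from a single channel together with the gate's time-reparameterisation. To derive a contradiction, I would choose a compact family $\mathcal{X}$ containing paths whose individual channels agree but whose joint paths differ. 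A clean choice is the two loops $t\mapsto(\cos t,\sin t)$ and its time-reversal in channel~2 arrangement, giving opposite Lévy areas while the channel marginals coincide up to reparameterisation. Alternatively, I would use a mixed-difference argument: take $X^{ab}$ with channel~1 from path $a$ and channel~2 from path $b$, for $a,b\in\{0,1\}$. Any sum of per-channel functionals $G_1(X_1)+G_2(X_2)$ satisfies $G(X^{11})-G(X^{10})-G(X^{01})+G(X^{00})=0$, and this identity passes to uniform limits. The Lévy area's mixed second difference can be made nonzero: choose channel paths so that $\int X_1\,dX_2$ is bilinear, giving difference $\int (a_1-a_0)\,d(b_1-b_0)\ne 0$. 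The main obstacle is stating precisely what ``per-channel'' means in Cor.~4.4 when the gates $\Delta$ mix channels. I would handle this by fixing the gate dependence via paths with identical gate inputs, or by restricting to the cited corollary's exact form, which includes the gate-driven time change. The mixed-difference test then needs paths where that time change is held fixed, which I would arrange by keeping the gate's argument unchanged across the four paths.

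Part (iii) is then a consequence rather than a new argument. By (ii), no single raw-input block suffices, so any approximating architecture must chain blocks. The growth of depth with $m$ is quoted from Sec.~C.2.1 and Prop.~4.5 of \citet{cirone_theoretical_2025}: in that construction each additional block raises the attainable iterated-integral order by one. The LogSig-SSM side is exactly (i) with $m\le m_k$.
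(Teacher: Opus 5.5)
Your arguments for (i) and (iii) follow the paper. In (i) you send $\lambda_{\min}\to\infty$ so that the anchor output is an injective continuous function of $\widetilde U_\tau$ alone, then apply MLP universality on the compact image. In (iii) you quote Sec.~C.2.1 and Prop.~4.5 of Cirone et al. One deviation: constant $s_B,s_C$ need bias terms that the paper's $s_B(u)=W_Bu$, $s_C(u)=W_Cu$ do not have. The skip connection you mention avoids this. So does staying in the paper's parametrisation with $\alpha_i=0$ and $W_C=W_B$ having orthonormal columns ($N\ge d$), which makes the limiting readout $\Delta\,\|\widetilde U_\tau\|^2\,\widetilde U_\tau$, an injective map.

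The genuine gap is in (ii). Your premise is that each summand of the single-block closure depends on one \emph{input} channel, up to a gate time change. That is false for Mamba. In \eqref{eq:mamba_closure_app} the index $i$ labels scan channels, and the driver $d\xi^{X,i}_s=X_s\,X^i_s\,\sigma(\widetilde\alpha_i\cdot X_s+\widetilde\beta_i)\,ds\in\R^{d}$ contains every cross product $X^j_sX^i_s$. Take $\widetilde\alpha_1=0$ (constant gate $c>0$) and let $\phi_1=\varphi\,e_2$ pick out the $X^2_sX^1_s$ component. Then the closure contains $G(X)=c\int_0^t\varphi\bigl(c(t-s)\bigr)X^1_sX^2_s\,ds$. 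Its mixed second difference on your paths is $c\int_0^t\varphi\bigl(c(t-s)\bigr)u_sv_s\,ds$, with $u=a_1-a_0$ and $v=b_1-b_0$. Your L\'evy-area difference $\int u\,dv$ vanishes whenever $uv\equiv0$, because then $v'=0$ wherever $u\neq0$. So making it nonzero forces $uv\not\equiv0$, and then some $\varphi$ gives $G$ a nonzero difference too. The identity you want to pass to uniform limits is therefore false on the closure, and no contradiction arises.

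Freezing the gate cannot fix this, since the coupling sits in $\xi$, not $\omega$. In any case the closure ranges over all $\widetilde\alpha_i\in\R^{d}$, so $\widetilde\alpha_i\cdot X_s$ cannot be held fixed across four distinct paths. Terms $\psi_i(\omega^{X,i}_t)$ with nonlinear $\psi_i$ also generally have nonzero mixed differences. The loop alternative fails for the same reason: $(\cos t,\sin t)$ and its reversal $(\cos t,-\sin t)$ are separated by $\int\varphi\,X^1X^2\,ds$. Single-block features are also $ds$-driven, so they are not reparametrisation invariant in the first place.

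The paper's proof of (ii) does not use channel additivity. It argues from \eqref{eq:mamba_closure_app} that every feature sees $X$ only through scalar gate clocks and $ds$-driven integrals of pointwise values $X_s$, whereas $A^{12}$ is a $dX$-driven antisymmetric iterated integral. You are right that $\mathcal X$ must be chosen, since (ii) fails for a singleton. If you want to keep a mixed-difference certificate, you need an invariant that all of \eqref{eq:mamba_closure_app} respects, and one such construction works as follows:
\begin{itemize}
\item Write $X^{pq}$ for the path with channel~1 from $a_p$ and channel~2 from $b_q$.
\item Let $a_0,a_1$ differ only on an interval $R_A$ containing the window start $s_0$ in its interior, with channel~2 zero there.
\item Let $b_0,b_1$ differ only on a later interval $R_B$ ending at the anchor $t$, with channel~1 zero there.
\item Take each pair equimeasurable on its interval, i.e.\ rearrangements of one another.
\end{itemize}
Then every clock increment $\int_R\sigma(\widetilde\alpha\cdot X_r+\widetilde\beta)\,dr$ over $R_A$ or $R_B$ is the same for all four paths. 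Hence every element of \eqref{eq:mamba_closure_app} splits as $C+f_A(a_p)+f_B(b_q)$ and has zero mixed difference on the finite, hence compact, family $\{X^{pq}\}$. The windowed L\'evy area on $X^{pq}$, however, equals $-\tfrac12\,a_p(s_0)\,b_q(t)$. Its mixed difference $-\tfrac12\,u(s_0)\,v(t)$ can be made nonzero. The contribution comes from the window's base point rather than from $\int u\,dv$, which is where raw-input features over $[0,t]$ and windowed log-signature coordinates differ.
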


\begin{remark}[Multi-scale and asymptotic universality]
\label{rem:tradeoff}
Proposition~\ref{prop:th_mixing_tradeoff} reflects a depth--truncation
tradeoff, not an expressivity gap. The benefit is that LogSig-SSM
moves the cost of cross-channel geometry from the online selective
scan to an offline, parallel tokeniser. Across scales, any
depth-$\leq m_k$ coordinate at scale $k$ enters $U_\tau$ linearly;
thus, a single LogSig-SSM block captures features that would
otherwise require a Mamba chain whose depth grows with $\max_k m_k$.
By signature universality \citep{lyons_differential_2007}, in the
limit $m_k\to\infty$ a single LogSig-SSM block approximates any
continuous functional of windowed paths, consistent with the
asymptotic universality of chained diagonal Mamba on raw $X$
(Prop.~4.5 of \citet{cirone_theoretical_2025}).
\end{remark}
\subsection{Computational Efficiency and Scan Complexity}
\label{sec:th_compression}

The selective scan scales linearly with sequence length and constitutes the main time-sequential component of a Mamba layer \cite{gu_mamba_2024}. Recent studies indicate that Mamba-style models can degrade at extreme context lengths, which has been linked to length-dependent state decay and a reduced effective memory horizon \cite{benkish2025decimambaexploringlengthextrapolation,waleffe2024empiricalstudymambabasedlanguage,ye2025longmambaenhancingmambaslong}. Consequently, reducing the effective sequence length is a critical design component for selective SSMs. 
LogSig-SSM performs this compression explicitly by replacing the length-$T$ input sequence with a token sequence of length $T'\approx T/s$, where $s\in\N$ is the stride. Since the selective scan cost scales as $O(B_{\mathrm{sz}}\cdot T\cdot N\cdot D)$, compression from $T$ to $T'$ reduces scan FLOPs by a factor of $T/T'\approx s$.
Additionally, log-signature tokenisation is a one-time offline pre-processing step: precomputed features are not in the training-time backpropagation graph, and computation is highly optimised in libraries such as \texttt{iisignature}~\cite{reizenstein2018iisignaturelibraryefficientcalculation} and \texttt{Signatory}~\cite{kidger2021signatorydifferentiablecomputationssignature}.

\subsection{Robustness to Irregular Sampling}
\label{sec:th_sampling}

Since log-signatures are invariant under monotone time reparameterisations (Section~\ref{sec:sig_background}), tokens computed from windowed paths encode path geometry rather than sampling density. Under a fixed interpolation, two observation sequences whose windowed paths coincide up to a monotone reparameterisation yield identical tokens and therefore identical model outputs.

\begin{proposition}[Within-window invariance to monotone time reparameterisation]
\label{prop:th_sampling_invariance}
Let $\widehat X,\widehat X':[0,T_{\mathrm{end}}]\to\R^{d_x}$ be two bounded-variation interpolants and $I_{\tau,k} := [\tilde t_\tau-L_k,\tilde t_\tau]$ the time interval for window $(\tau,k)$. For every scale $k$ and anchor $\tau$, assume there exists a continuous, non-decreasing, surjective map $\gamma_{\tau,k}: I_{\tau,k} \to I_{\tau,k}$ such that
$
\mathcal{W}_k(\widehat X')_\tau(t) = \mathcal{W}_k(\widehat X)_\tau(\gamma_{\tau,k}(t)), \quad  \forall t \in I_{\tau,k}.
$
Then $U_\tau(\widehat X)=U_\tau(\widehat X')$ for all $\tau$, and $\mathrm{LogSigSSM}(\widehat X)=\mathrm{LogSigSSM}(\widehat X')$.
\end{proposition}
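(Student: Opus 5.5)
The argument is a direct composition: establish equality of tokens using the reparameterisation invariance (eq. (logsig_time_reparam_invariance)), then observe that every downstream stage of LogSig-SSM sees its input only through the token sequence. I will proceed in three steps.

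\textbf{Step 1: equal log-signatures.} Fix a scale $k$ and an anchor $\tau$. Then $\mathcal{W}_k(\widehat X')_\tau=\mathcal{W}_k(\widehat X)_\tau\circ\gamma_{\tau,k}$, and $\gamma_{\tau,k}$ is a continuous, non-decreasing surjection of $I_{\tau,k}$ onto itself. Equation (sig_time_reparam_invariance) is stated for maps $[0,1]\to[a,b]$, so I first compose with the affine bijection $\lambda:[0,1]\to I_{\tau,k}$. The map $\gamma_{\tau,k}\circ\lambda$ is then an admissible reparameterisation onto $I_{\tau,k}$. Applying the invariance once to $\mathcal{W}_k(\widehat X)_\tau$ with $\gamma_{\tau,k}\circ\lambda$, and once with $\lambda$ alone, gives
\[
\mathrm{Sig}\big(\mathcal{W}_k(\widehat X')_\tau\big)_{I_{\tau,k}}=\mathrm{Sig}\big(\mathcal{W}_k(\widehat X)_\tau\big)_{I_{\tau,k}}.
\]
Applying the tensor logarithm and truncating at depth $m_k$ gives equality of $\mathrm{LogSig}_{m_k}$.

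\textbf{Step 2: equal tokens.} The coordinate map $\phi_{m_k}$ is a fixed function, so $u_{\tau,k}(\widehat X)=u_{\tau,k}(\widehat X')$. Concatenating over $k$ via (eq:sigssm_token) then yields $U_\tau(\widehat X)=U_\tau(\widehat X')$ for every $\tau$.

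\textbf{Step 3: equal outputs.} The remaining pipeline is deterministic given the token sequence $U_{1:T'}$. It consists of the embedding $W_{\mathrm{in}}$, the selective scan (eq:sigssm_ssm_form) with $\Delta_{i,\tau}$, $s_B$, $s_C$ all functions of $\widetilde U_\tau$, any stacked blocks, and the head. No stage reads the raw sample times or values except through $U_\tau$. Identical inputs therefore give identical outputs, which is the final claim $\mathrm{LogSigSSM}(\widehat X)=\mathrm{LogSigSSM}(\widehat X')$.

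The main obstacle is Step 1, in two places. The first is the domain mismatch between $\gamma_{\tau,k}$ and the $[0,1]$ convention, which the affine rescaling handles. The second concerns the validity of the invariance for bounded-variation rather than $C^1$ paths. Here I would invoke the Riemann–Stieltjes definition of iterated integrals for bounded-variation paths, under which the change-of-variables formula holds for monotone continuous reparameterisations, so the invariance extends. Near $t=0$ the window is truncated to $[\max(0,\tilde t_\tau-L_k),\tilde t_\tau]$; the same argument applies verbatim on that interval, provided the hypothesis is read on it. Step 3 also relies on the anchor grid $\{\tilde t_\tau\}$ and $T'$ being shared by the two inputs, which the statement implicitly assumes.
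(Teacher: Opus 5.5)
Your proposal is correct and follows the paper's proof. Both go through the same chain: reparameterisation invariance of the signature gives equal windowed signatures, the truncated logarithm and the fixed coordinate map $\phi_{m_k}$ give equal $u_{\tau,k}$, concatenation gives equal $U_\tau$, and determinism of the downstream scan gives equal outputs. Your extra care fills in points the paper's one-line argument leaves implicit: rescaling $I_{\tau,k}$ to $[0,1]$, extending the invariance from $C^1$ to piecewise-linear paths via the Riemann--Stieltjes change of variables, handling the truncated windows near $t=0$, and noting the shared anchor grid.
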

\begin{proof}
The reparameterisation invariance of the signature (cf.\ \eqref{eq:sig_time_reparam_invariance}) gives $\mathrm{Sig}(\mathcal{W}_k(\widehat X)_\tau)=\mathrm{Sig}(\mathcal{W}_k(\widehat X')_\tau)$. Taking the truncated log-signature and applying the fixed coordinate map $\phi_{m_k}$ yields $u_{\tau,k}(\widehat X)=u_{\tau,k}(\widehat X')$. Concatenating across scales gives $U_\tau(\widehat X)=U_\tau(\widehat X')$, and the claim follows from the determinism of the selective scan.
\end{proof}

The theoretical guarantee is local to each window. However, Section~\ref{sec:physionet_results} provides empirical evidence that this local invariance is sufficient in practice: LogSig-SSM is highly competitive with specialised irregular-time methods on PhysioNet Sepsis, a dataset where only $10.3\%$ of values are observed.
\section{Experimental Evaluation}
\label{sec:experiments}
 
We empirically test the performance of LogSig-SSM across four settings: long-range multivariate classification, high-frequency physiological regression, multivariate weather forecasting, and irregularly sampled clinical prediction.
Baselines span SSM/recurrent models, continuous-time models, and general time-series methods.
We follow the Log-NCDE protocol~\cite{walker_log_2024} for UEA and PPG-DaLiA, the LinOSS protocol~\cite{rusch_oscillatory_2025} for weather forecasting, and the Neural CDE protocol~\cite{kidger_neural_2020} for PhysioNet Sepsis.
All experiments were done on a NVIDIA RTX~4090 GPU.
Full runtime, memory, and hyperparameter details are in Appendix~\ref{app:memory_runtime} and~\ref{app:hyperparams}.

\subsection{UEA time-series classification}
\label{sec:uea_results}
 
Following the Log-NCDE protocol~\cite{walker_log_2024}, we evaluate on the six longest-sequence datasets of the UEA Archive~\cite{bagnall_uea_2018}, split $70{:}15{:}15$ over five random seeds. LogSig-SSM achieves the best average accuracy across the sixteen models ($71.7\%$) and the best average rank ($2.7$) (Table~\ref{tab:uea_main}).
 
LogSig-SSM demonstrates a distinct advantage on signal types where standard diagonal SSMs severely struggle, most notably on the \textsc{EthanolConcentration} task (4 classes; chance $25\%$). While diagonal SSMs fail to surpass $30\%$ accuracy and the best non-SSM baseline reaches $40.5\%$, LogSig-SSM reaches the highest accuracy of $51.4\%$. This signal is an absorption curve whose classes differ by smooth shifts in peak shape over extended regions rather than by high-frequency variation (Figure~\ref{fig:uea_signals}), a global trajectory geometry that multi-scale log-signature tokens capture directly and that diagonal SSM backbones cannot resolve on their own.
 
Beyond accuracy, LogSig-SSM delivers large efficiency gains, summarised for \textsc{EigenWorms} in Table~\ref{tab:worms_efficiency}.
On \textsc{EigenWorms} ($17{,}984$ time points), LogSig-SSM trains in 4\,s per 1000 steps versus 122\,s for Mamba ($30\times$ faster), while using only 362\,MB GPU memory versus 13{,}486\,MB for Mamba ($37\times$ less).
Log-signature tokenisation is a one-time offline step taking $1.6$\,s for the full dataset, and $6$\,s for all six UEA datasets together, about $1\%$ of a single training run per dataset (Table~\ref{tab:uea_sigtime}). These gains come at no accuracy cost: LogSig-SSM leads on \textsc{EigenWorms} while requiring $10\times$ less memory than LinOSS-IM.
\begin{table*}[h]
\centering
\caption{UEA long-sequence classification accuracy (\%\,$\uparrow$), mean\,$\pm$\,std over five random splits, and average rank over the six datasets ($\downarrow$, competition ranking, missing entries ranked last).
Bold\,=\,best, underline\,=\,second best, italic\,=\,third best per column (ties share a mark).
$^\dagger$Transformer OOM on \textsc{EigenWorms}; TSPulse not evaluated on \textsc{EigenWorms}; $^\ast$Model evaluated on 1 seed.}
\label{tab:uea_main}
\scriptsize
\setlength{\tabcolsep}{3.2pt}
\renewcommand{\arraystretch}{0.86}
\begin{tabular}{lcccccccc}
\toprule
Model & EigenWorms & SCP1 & SCP2 & Ethanol & Heartbeat & Motor & Avg & Rank \\
\midrule
Seq.\ length & 17,984 & 896 & 1,152 & 1,751 & 405 & 3,000 & & \\
\# classes   & 5      & 2   & 2     & 4     & 2   & 2     & & \\
\midrule
\multicolumn{9}{l}{\textit{Continuous-time models}} \\
NRDE \cite{morrill_neural_2021}        & $83.9{\scriptstyle\pm7.3}$  & $80.9{\scriptstyle\pm2.5}$ & $53.7{\scriptstyle\pm6.9}$ & $25.3{\scriptstyle\pm1.8}$ & $72.9{\scriptstyle\pm4.8}$ & $47.0{\scriptstyle\pm5.7}$ & $60.6$ & $11.5$ \\
NCDE \cite{kidger_neural_2020}       & $75.0{\scriptstyle\pm3.9}$  & $79.8{\scriptstyle\pm5.6}$ & $53.0{\scriptstyle\pm2.8}$ & $29.9{\scriptstyle\pm6.5}$ & $73.9{\scriptstyle\pm2.6}$ & $49.5{\scriptstyle\pm2.8}$ & $60.2$ & $11.3$ \\
Log-NCDE \cite{walker_log_2024}   & $85.6{\scriptstyle\pm5.1}$  & $83.1{\scriptstyle\pm2.8}$ & $53.7{\scriptstyle\pm4.1}$ & $34.4{\scriptstyle\pm6.4}$ & $75.2{\scriptstyle\pm4.6}$ & $53.7{\scriptstyle\pm5.3}$ & $64.3$ & $7.7$ \\
\midrule
\multicolumn{9}{l}{\textit{General time-series models}} \\
Transformer$^\dagger$ \cite{moreno-pino_rough_2024} & OOM & $84.3{\scriptstyle\pm6.3}$ & $49.1{\scriptstyle\pm2.5}$ & $\underline{40.5{\scriptstyle\pm6.3}}$ & $70.5{\scriptstyle\pm0.1}$ & $50.5{\scriptstyle\pm3.0}$ & $59.0$ & $10.3$ \\
RFormer  \cite{moreno-pino_rough_2024} & $90.3{\scriptstyle\pm0.0}$  & $81.2{\scriptstyle\pm2.0}$ & $52.3{\scriptstyle\pm3.0}$ & $34.7{\scriptstyle\pm4.0}$ & $72.5{\scriptstyle\pm0.0}$ & $55.8{\scriptstyle\pm6.0}$ & $64.5$ & $8.0$ \\
TS2Vec$^\ast$  \cite{yue2022ts2vecuniversalrepresentationtime}    & $84.7$ & $81.2$ & $57.8$ & $30.8$ & $68.3$ & $51.0$ & $62.3$ & $9.3$ \\
MOMENT$^\ast$ \cite{goswami2024momentfamilyopentimeseries}     & $80.9$ & $84.0$ & $47.8$ & $\mathit{35.7}$ & $72.2$ & $50.0$ & $61.8$ & $10.2$ \\
TSPulse$^{\dagger\ast}$ \cite{ekambaram2026tspulsetinypretrainedmodels} & --- & $83.6$ & $51.1$ & $24.7$ & $70.2$ & $58.0$ & $57.5$ & $11.7$ \\
\midrule
\multicolumn{9}{l}{\textit{SSM / recurrent models}} \\
LRU \cite{orvieto_resurrecting_2023}         & $87.8{\scriptstyle\pm2.8}$  & $82.6{\scriptstyle\pm3.4}$ & $51.2{\scriptstyle\pm3.6}$ & $21.5{\scriptstyle\pm2.1}$ & $\underline{78.4{\scriptstyle\pm6.7}}$& $48.4{\scriptstyle\pm5.0}$ & $61.7$ & $9.7$ \\
S5 \cite{smith_simplified_2023}         & $81.1{\scriptstyle\pm3.7}$  & $\mathbf{89.9{\scriptstyle\pm4.6}}$ & $50.5{\scriptstyle\pm2.6}$ & $24.1{\scriptstyle\pm4.3}$ & $\mathit{77.7{\scriptstyle\pm5.5}}$ & $47.7{\scriptstyle\pm5.5}$ & $61.8$ & $9.3$ \\
Mamba \cite{gu_mamba_2024} & $70.9{\scriptstyle\pm15.8}$ & $80.7{\scriptstyle\pm1.4}$ & $48.2{\scriptstyle\pm3.9}$ & $27.9{\scriptstyle\pm4.5}$ & $76.2{\scriptstyle\pm3.8}$ & $47.7{\scriptstyle\pm4.5}$ & $58.6$ & $12.5$ \\
LinOSS-IMEX \cite{rusch_oscillatory_2025} & $80.0{\scriptstyle\pm2.7}$  & $87.5{\scriptstyle\pm4.0}$ & $\mathbf{58.9{\scriptstyle\pm8.1}}$ & $29.9{\scriptstyle\pm1.0}$ & $75.5{\scriptstyle\pm4.3}$ & $57.9{\scriptstyle\pm5.3}$ & $65.0$ & $6.7$ \\
LinOSS-IM \cite{rusch_oscillatory_2025} & $\underline{95.0{\scriptstyle\pm4.4}}$ & $87.8{\scriptstyle\pm2.6}$ & $58.2{\scriptstyle\pm6.9}$ & $29.9{\scriptstyle\pm0.6}$ & $75.8{\scriptstyle\pm3.7}$ & $\underline{60.0{\scriptstyle\pm7.5}}$ & $\mathit{67.8}$ & $\mathit{4.3}$ \\
D-LinOSS \cite{boyer_learning_2025} & $\mathit{93.9{\scriptstyle\pm3.2}}$  & $\underline{88.9{\scriptstyle\pm3.0}}$ & $\underline{58.6{\scriptstyle\pm2.3}}$ & $29.9{\scriptstyle\pm0.6}$ & $75.8{\scriptstyle\pm4.9}$ & $\mathbf{61.1{\scriptstyle\pm2.0}}$ & $\underline{68.0}$ & $\underline{3.7}$ \\
PD-SSM \cite{terzic2025structuredsparsetransitionmatrices}     & $90.0{\scriptstyle\pm5.7}$  & $80.9{\scriptstyle\pm2.0}$ & $56.1{\scriptstyle\pm8.6}$ & $34.7{\scriptstyle\pm4.0}$ & $\mathbf{80.0{\scriptstyle\pm2.6}}$ & $\underline{60.0{\scriptstyle\pm3.7}}$ & $67.0$ & $5.2$ \\
\midrule
LogSig-SSM & $\mathbf{95.3{\scriptstyle\pm3.6}}$ & $\mathit{88.4{\scriptstyle\pm1.9}}$ & $\mathit{58.4{\scriptstyle\pm2.9}}$ & $\mathbf{51.4{\scriptstyle\pm6.9}}$ & $76.9{\scriptstyle\pm9.7}$ & $59.9{\scriptstyle\pm3.4}$ & $\mathbf{71.7}$ & $\mathbf{2.7}$ \\
\bottomrule
\end{tabular}
\end{table*}

\begin{table}[h]
\centering
\footnotesize
\caption{Efficiency on \textsc{EigenWorms} ($T=17{,}984$): GPU memory (MB), wall-clock time per 1000 training steps (s) on one RTX~4090, and test accuracy (\%). Full results in Appendix~\ref{app:memory_runtime}.}
\label{tab:worms_efficiency}
\setlength{\tabcolsep}{4pt}
\begin{tabular}{lcccccccc}
\toprule
 & NCDE & NRDE & Log-NCDE & Mamba & S5 & LinOSS-IM & D-LinOSS & \textbf{LogSig-SSM} \\
\midrule
Mem.\ (MB)          & 2484 & 2506 & 2510 & 13486 & 6646 & 3488 & 3488 & \textbf{362} \\
Time (s/1k steps)   & 24595 & 5386 & 1956 & 122 & 31 & 14 & 14 & \textbf{4} \\
Accuracy (\%)       & 75.0 & 83.9 & 85.6 & 70.9 & 81.1 & 95.0 & 93.9 & \textbf{95.3} \\
\bottomrule
\end{tabular}
\end{table}
 
\begin{table*}[t]
\centering
\footnotesize
\caption{Results on regression, forecasting, and irregular-sampling benchmarks.
Left: PPG-DaLiA heart-rate regression, MSE\,$\times10^{-2}$ ($\downarrow$), mean\,$\pm$\,std over five runs~\cite{attila_reiss_ppg-dalia_2019}.
Centre: weather forecasting MAE ($\downarrow$), predict 30 days of hourly data from the preceding 30 days; Informer$^{\dagger}$ uses canonical self-attention.
Right: PhysioNet Sepsis AUROC ($\uparrow$), $10.3\%$ of values observed; OI\,=\,with observation indicators~\cite{PhysioNet-challenge-2019-1.0.0}.
Bold\,=\,best, underline\,=\,second best per sub-table.}
\label{tab:three_benchmarks}
\setlength{\tabcolsep}{5pt}
\renewcommand{\arraystretch}{0.95}
\begin{tabular}{lc@{\hspace{14pt}}lc@{\hspace{14pt}}lcc}
\toprule
\multicolumn{2}{c}{\textbf{Regression (PPG-DaLiA)}} &
\multicolumn{2}{c}{\textbf{Forecasting (Weather)}} &
\multicolumn{3}{c}{\textbf{Irregular Sampling (PhysioNet Sepsis)}} \\
\cmidrule(lr){1-2}\cmidrule(lr){3-4}\cmidrule(lr){5-7}
Model & MSE\,$\!\times\!10^{-2}$ & Model & MAE & Method & OI & No OI \\
\midrule
NRDE         & $9.90{\scriptstyle\pm0.97}$             & Informer             & $0.731$              & GRU-$\Delta t$ & $87.8{\scriptstyle\pm0.6}$ & $84.0{\scriptstyle\pm0.7}$ \\
NCDE         & $13.54{\scriptstyle\pm0.69}$            & Informer$^{\dagger}$ & $0.741$              & GRU-D          & $87.1{\scriptstyle\pm2.2}$ & $85.0{\scriptstyle\pm1.3}$ \\
Log-NCDE     & $9.56{\scriptstyle\pm0.59}$             & LogTrans             & $0.773$              & GRU-ODE        & $85.2{\scriptstyle\pm1.0}$ & $77.1{\scriptstyle\pm2.4}$ \\
LRU          & $12.17{\scriptstyle\pm0.49}$            & Reformer             & $1.575$              & ODE-RNN        & $87.4{\scriptstyle\pm1.6}$ & $83.3{\scriptstyle\pm2.0}$ \\
S5           & $12.63{\scriptstyle\pm1.25}$            & LSTMa                & $1.109$              & Neural CDE     & $88.0{\scriptstyle\pm0.6}$ & $77.6{\scriptstyle\pm0.9}$ \\
S6           & $12.88{\scriptstyle\pm2.05}$            & LSTnet               & $0.757$              & ANCDE          & $90.0{\scriptstyle\pm0.2}$ & $82.3{\scriptstyle\pm0.3}$ \\
Mamba        & $10.65{\scriptstyle\pm2.20}$            & S4                   & $0.578$              & Neural LSDE    & $\underline{90.9{\scriptstyle\pm0.4}}$ & $87.9{\scriptstyle\pm0.8}$ \\
LinOSS-IMEX  & $7.50{\scriptstyle\pm0.46}$             & LinOSS-IMEX          & $0.508$              & Neural LNSDE   & $\mathbf{91.1{\scriptstyle\pm0.2}}$ & $\underline{88.1{\scriptstyle\pm0.2}}$ \\
LinOSS-IM    & $6.40{\scriptstyle\pm0.23}$             & LinOSS-IM            & $0.528$              & Neural GSDE    & $\underline{90.9{\scriptstyle\pm0.1}}$ & $\mathbf{88.4{\scriptstyle\pm0.2}}$ \\
D-LinOSS     & $\underline{6.16{\scriptstyle\pm0.73}}$ & D-LinOSS             & $\underline{0.486}$  &                &                            &                            \\
\midrule
LogSig-SSM & $\mathbf{4.46{\scriptstyle\pm0.60}}$ & LogSig-SSM & $\mathbf{0.432}$ & LogSig-SSM & $90.4{\scriptstyle\pm0.4}$ & $87.4{\scriptstyle\pm0.8}$ \\
\bottomrule
\end{tabular}
\end{table*}
\subsection{PPG-DaLiA time-series regression}
\label{sec:ppg_results}
\textsc{PPG-DaLiA}~\cite{attila_reiss_ppg-dalia_2019} is a multimodal physiological regression benchmark for heart-rate estimation from wrist-worn sensors. It contains recordings from 15 subjects during daily activities, each with approximately 150 minutes of data sampled at up to 128\,Hz. Following the Log-NCDE protocol~\cite{walker_log_2024}, recordings are converted into long sequences via a sliding window of length $49{,}920$ and stride $4{,}992$.
LogSig-SSM attains MSE $4.46\!\times\!10^{-2}$, a 28\% improvement over D-LinOSS~\cite{boyer_learning_2025} ($6.16\!\times\!10^{-2}$) and 58\% over Mamba~\cite{gu_mamba_2024} ($10.65\!\times\!10^{-2}$), the strongest relative gain across all four benchmarks (Table~\ref{tab:three_benchmarks}, left). On these ultra-long sequences ($49{,}920$ time steps), log-signature tokenisation directly alleviates the long-context degradation that limits standard selective SSMs, making training feasible in regimes where several baselines are memory-constrained.
 
\subsection{Weather Forecasting}
\label{sec:weather_results}
 
Following the precedent set by prior state-space models \cite{gu_efficiently_2022, rusch_oscillatory_2025}, we evaluate LogSig-SSM's capacity to serve as a general-purpose sequence-to-sequence model on long-horizon forecasting tasks. We focus on the challenging weather prediction benchmark introduced by \citet{zhou2021informerefficienttransformerlong}, specifically targeting the hardest configuration: predicting the future 720 timesteps (one month of hourly multivariate climatological data) based directly on the preceding 720 timesteps. Without incorporating any domain-specific inductive biases or specialised forecasting architectures, LogSig-SSM achieves an MAE of $0.432$ (Table~\ref{tab:three_benchmarks}), reducing the best prior result from D-LinOSS ($0.486$) by 11\% and outperforming all other baselines by a larger margin. These results demonstrate that multi-scale log-signature tokenisation effectively generalises to standard long-horizon forecasting settings.

\subsection{Sepsis Prediction on PhysioNet (Irregular Sampling)}
\label{sec:physionet_results}

We evaluate on the PhysioNet 2019 early sepsis prediction challenge~\cite{PhysioNet-challenge-2019-1.0.0,doi:10.1161/01.CIR.101.23.e215}. Comprising 40,335 variable-length ICU stays with 34 features, this dataset is extremely sparse (only $10.3\%$ observed values), providing a direct test of the irregular-sampling robustness established in Section~\ref{sec:th_sampling}. Predicting sepsis occurrence based on the first 72 hours, we evaluate AUROC both with and without appended observation indicators (OI) (Table~\ref{tab:three_benchmarks}, right). 

LogSig-SSM achieves an AUROC of $90.4/87.4$ (with/without OI). It performs competitively with purpose-built irregular-time models (Neural LSDE, LNSDE, GSDE~\cite{oh2025stableneuralstochasticdifferential}) and outperforms Neural CDE ($88.0/77.6$) \cite{kidger_neural_2020}, ANCDE ($90.0/82.3$) \cite{ANCDE}, and all GRU variants. This empirically confirms that within-window reparameterisation invariance yields practical robustness under extreme sparsity.
\subsection{Ablation Studies}
\label{sec:ablations_main}
Table~\ref{tab:ablation_ladder} isolates the contributions of LogSig-SSM along two axes, the tokeniser and the backbone, with every variant tuned under the same protocol and budget. Read top to bottom, the descriptor rows decompose the $13.1$\pp\ gap between raw Mamba ($58.6\%$) and LogSig-SSM ($71.7\%$). Compression alone contributes little, $+0.3$\pp\ with Catch24 statistics that discard the path and $+1.5$\pp\ with a learned per-channel descriptor that keeps it. Linear channel mixing adds $+2.8$\pp\ ($62.9\%$), multi-scale spectral content another $+4.7$\pp\ ($67.6\%$), multi-scale increments, the depth-one log-signature, a further $+1.9$\pp\ ($69.5\%$), and the second-order terms the final step to $71.7\%$. This last term, the cross-channel geometry, is the one that neither linear projections nor spectral summaries express. With the tokeniser fixed, S5 and Transformer backbones both far exceed raw Mamba, so the tokens are useful regardless of backbone, while selective recurrence still leads on each of the six datasets.

\paragraph{Truncation depth.} The depth-one row of Table~\ref{tab:ablation_ladder} is a matched-budget arm that re-searches the tokeniser with the depth fixed at one (Appendix~\ref{app:depth_study}). Forcing depth one costs $2.3$ points on average, almost entirely on the three datasets with few channels ($7.2$, $4.3$ and $2.8$ on \textsc{EigenWorms}, SCP1 and SCP2). Wavelet scattering makes the content of the second-order terms visible: on those three datasets the depth-one arm is level with Wst-Mamba ($-1.0$, $-1.8$, $+2.5$ points) while depth two leads it by $6.2$, $2.5$ and $5.3$, so the margin over the strongest alternative descriptor is exactly the cross-channel geometry of Proposition~\ref{prop:th_mixing_tradeoff}. Signature tokens of the same depth and configuration cost $3.1$ points on average over the same three datasets and roughly double tokenisation time (Appendix~\ref{app:sig_vs_logsig}).

\paragraph{Search-free tokeniser.} The tokeniser still improves performance without hyperparameter tuning. Keeping the Mamba backbone of the reported LogSig-SSM configuration and replacing the searched tokeniser hyperparameters by a rule based on the sequence length alone ($s=T/100$, lengths $[T/25, T/100]$, depths $[2,1]$), LogSig-SSM still reaches $64.0\%$ average accuracy, above S5 ($61.8$), LRU ($61.7$) and Mamba ($58.6$), and searching the tokeniser adds the remaining $7.7$ points (Appendix~\ref{app:hyperparams}).

\begin{table}[h]
\centering
\scriptsize
\caption{UEA ablations, average accuracy over the six datasets (\%\,$\uparrow$). All variants share the 200-trial Optuna protocol and budget of LogSig-SSM, and its window positions and token dimension where applicable; $\Delta$ is relative to raw Mamba. Per-dataset results in Appendix~\ref{app:ablations_uea}, Table~\ref{tab:unified_ablation}.}
\label{tab:ablation_ladder}
\setlength{\tabcolsep}{7pt}
\renewcommand{\arraystretch}{0.95}
\begin{tabularx}{\textwidth}{llcX}
\toprule
Configuration & Avg.\ acc.\ (\%) & $\Delta$ (pp) & What it adds \\
\midrule
\multicolumn{4}{l}{\textit{Window descriptor (all rows compress the sequence), Mamba backbone fixed}}\\
Mamba (raw input)                              & \accbar{0.54cm}{58.6}                    & --               & no compression, no channel mixing \\
Catch24 statistics \cite{lubba_catch22_2019}   & \accbar{0.59cm}{58.9}                    & $+0.3$           & path discarded, summary statistics only \\
Depthwise convolution                          & \accbar{0.77cm}{60.1}                    & $+1.5$           & channels kept separate \\
Standard convolution                           & \accbar{1.19cm}{62.9}                    & $+4.3$           & first-order (linear) channel mixing \\
Wavelet scattering                             & \accbar{1.89cm}{67.6}                    & $+9.0$           & multi-scale spectral content, no cross-channel geometry \\
Log-signature, depth one            & \accbar{2.18cm}{69.5}                    & $+10.9$          & multi-scale increments, no cross-channel content \\
Log-signature, depth two (\textbf{LogSig-SSM}) & \accbar[green!45]{2.51cm}{\textbf{71.7}} & $\mathbf{+13.1}$ & second-order terms: cross-channel geometry, L\'evy areas (Prop.~\ref{prop:th_mixing_tradeoff}) \\
\midrule
\multicolumn{4}{l}{\textit{Backbone, log-signature tokeniser fixed}}\\
LogSig-S5                                      & \accbar{1.20cm}{63.0}                    & $+4.4$           & LTI recurrence, no selectivity \\
LogSig-Transformer                             & \accbar{1.56cm}{65.4}                    & $+6.8$           & attention, no recurrence \\
\bottomrule
\end{tabularx}
\end{table}

\section{Conclusion}
\label{sec:conclusion}
We introduce LogSig-SSM, which combines multi-scale windowed log-signature tokenisation with a selective SSM backbone for long-horizon time-series modelling. By replacing pointwise inputs with interval descriptors, LogSig-SSM compresses long, irregularly sampled sequences into a shorter token stream while injecting higher-order cross-channel information through log-signature features, enabling scan-efficient long-range propagation while increasing single-block multivariate expressivity.

Beyond its discrete-time formulation, LogSig-SSM extends the linear CDE view of selective SSMs~\cite{cirone_theoretical_2025} to log-signature-derived driving paths, yielding an NRDE-style interval-driven form in which selectivity allocates internal time across the compressed sequence in a content-dependent manner. Empirically, LogSig-SSM achieves the best or competitive performance across the four benchmarks above while substantially reducing training time and GPU memory relative to Mamba-style baselines. Ablation studies confirm that the gains arise from the geometric content of log-signature tokens, with selective recurrence providing additional margin on tasks requiring content-dependent aggregation.

A limitation for LogSig-SSM is that for genuinely single-channel paths, log-signature tokens reduce to path increments and the higher-order cross-channel structure becomes degenerate, although augmenting the path with time as an additional channel restores non-trivial structure. The token width grows as $O(d_x^{m}/m)$, so tokenisation time and the size of the input embedding increase rapidly with the number of channels. Finally, window lengths and stride are still selected per dataset (Appendix~\ref{app:hyperparams}). Learned or adaptive window selection is a natural direction for reducing this overhead.

\begin{ack}
We thank Helena Latorre Moreno for her feedback on the mathematical proofs and Foucault de Jerphanion for his help in the early stages of this work. This work was supported by the UKRI AI Centre for Doctoral Training in Digital Healthcare (grant number EP/Y030974/1).
\end{ack}

\newpage
\bibliographystyle{plainnat}
\bibliography{references}

\appendix
\appendix
\newpage
\section{Additional Derivations}
\label{app:derivations}

\subsection{Derivation of the Linear CDE Interpretation of the Selective Scan}
\label{app:linear_cde_derivation}

We provide the derivation underlying the continuous-time interpretation stated in Section~\ref{sec:sig_ssm_continuous}, following the standard small-step approximation used in selective SSM analyses \cite{cirone_theoretical_2025} and replacing the pointwise input by the embedded token path $\widetilde U$.

In the continuous-time formulation we take $W_B\in\R^{N\times d}$ and identify the discrete-time input projection by $s_B(\widetilde U):=W_B\,\widetilde U\in\R^{N}$. Fix a partition $0=r_0<\cdots<r_{T'}=1$ and extend the embedded token sequence $\{\widetilde U_\tau\}_{\tau=1}^{T'}$ to a ZOH path $\widetilde U:[0,1]\to\R^{d}$ by \eqref{eq:sigssm_U_path}. Introduce a base step $\delta>0$ and rescale
\begin{equation}
\widetilde\alpha_i := \alpha_i/\delta,
\qquad
\widetilde\beta_i := \beta_i/\delta,
\qquad i=1,\dots,d.
\end{equation}
Following \cite{cirone_theoretical_2025}, we further approximate the softplus gate by a ReLU,
$\sigma(x)=\mathrm{ReLU}(x)\approx \log(1+e^{x})$, which yields for small $\delta$,
\begin{equation}
\Delta_i(\widetilde U)
=
\softplus(\alpha_i^\top \widetilde U+\beta_i)
=
\softplus\!\big(\delta(\widetilde\alpha_i^\top \widetilde U+\widetilde\beta_i)\big)
\;\approx\;
\delta\,\sigma(\widetilde\alpha_i^\top \widetilde U+\widetilde\beta_i).
\label{eq:sigssm_delta_relu_approx_app}
\end{equation}
Consequently, as $\delta\to 0$, a first-order expansion of the matrix exponential gives
\begin{equation}
\overline A_i(\widetilde U)
=
\exp(\Delta_i(\widetilde U)A_i)
=
\exp\!\big(\delta\,\sigma(\widetilde\alpha_i^\top \widetilde U+\widetilde\beta_i)\,A_i\big)
\;\approx\;
I+\delta\,\sigma(\widetilde\alpha_i^\top \widetilde U+\widetilde\beta_i)\,A_i .
\label{eq:sigssm_expansion_Abar_app}
\end{equation}

For the injection, we use the standard small-step approximation consistent with the Mamba-style selective scan:
\begin{equation}
\overline B_{i}(\widetilde U_\tau)\,\widetilde U_\tau^i
\;\approx\;
s_B(\widetilde U_\tau)\,\Delta_i(\widetilde U_\tau)\,\widetilde U_\tau^i
\;\approx\;
s_B(\widetilde U_\tau)\,\sigma(\widetilde\alpha_i^\top \widetilde U_\tau+\widetilde\beta_i)\,\delta\,\widetilde U_\tau^i.
\label{eq:sigssm_Bbar_approx_app}
\end{equation}

Substituting \eqref{eq:sigssm_expansion_Abar_app}--\eqref{eq:sigssm_Bbar_approx_app} into the selective recurrence \eqref{eq:sigssm_ssm_form} gives the Euler-type update
\begin{equation}
H_{i,\tau}
=
H_{i,\tau-1}
+
A_iH_{i,\tau-1}\,\sigma(\widetilde\alpha_i^\top \widetilde U_\tau+\widetilde\beta_i)\,\delta
+
s_B(\widetilde U_\tau)\,\widetilde U_\tau^i\,\sigma(\widetilde\alpha_i^\top \widetilde U_\tau+\widetilde\beta_i)\,\delta,
\label{eq:sigssm_euler_like_app}
\end{equation}
which is the forward Euler discretisation of the ODE
\begin{equation}
\dot H^{\widetilde U}_{i;t}
=
A_i H^{\widetilde U}_{i;t}\,\sigma(\widetilde\alpha_i^\top \widetilde U_t+\widetilde\beta_i)
+
s_B(\widetilde U_t)\,\widetilde U_t^i\,\sigma(\widetilde\alpha_i^\top \widetilde U_t+\widetilde\beta_i),
\qquad i=1,\dots,d.
\label{eq:sigssm_ode_limit_app}
\end{equation}

Define the drivers
\begin{equation}
\omega^{\widetilde U}_{i;t}
:=
\int_0^t \sigma(\widetilde\alpha_i^\top \widetilde U_s+\widetilde\beta_i)\,ds
\in\R,
\qquad
\xi^{\widetilde U}_{i;t}
:=
\int_0^t \widetilde U_s\,\widetilde U_s^i\,\sigma(\widetilde\alpha_i^\top \widetilde U_s+\widetilde\beta_i)\,ds
\in\R^{d}.
\label{eq:sigssm_drivers_mamba_style_app}
\end{equation}
Then \eqref{eq:sigssm_ode_limit_app} can be rewritten in linear CDE form as
\begin{equation}
dH^{\widetilde U}_{i;t}
=
A_i H^{\widetilde U}_{i;t}\,d\omega^{\widetilde U}_{i;t}
+
W_B\,d\xi^{\widetilde U}_{i;t},
\qquad
H^{\widetilde U}_{i;0}=h_{i,0},
\qquad i=1,\dots,d,
\label{eq:sigssm_linear_cde_app}
\end{equation}
which matches \eqref{eq:sigssm_linear_cde} in the main text.

For completeness, note that if $\widetilde U(t)$ is held constant on $[r_{\tau-1},r_{\tau})$ and the system is evolved for an internal increment $\Delta_{i,\tau}$, the exact ZOH discretisation gives:
\begin{equation}
H_{i,\tau}
=
\exp(\Delta_{i,\tau}A_i)\,H_{i,\tau-1}
+
(\Delta_{i,\tau}A_i)^{-1}\!\big(\exp(\Delta_{i,\tau}A_i)-I\big)\,
\big(\Delta_{i,\tau}\,s_B(\widetilde U_\tau)\big)\,\widetilde U_\tau^i.
\label{eq:sigssm_exact_step_backlink_app}
\end{equation}
In particular, taking $\Delta_{i,\tau}=\Delta_i(\widetilde U_\tau)$ yields the Mamba-style selective recurrence
\[
H_{i,\tau}=\overline A_{i,\tau}H_{i,\tau-1}+\overline B_{i,\tau}\,\widetilde U_\tau^i,
\]
with $\overline A_{i,\tau}=\exp(\Delta_{i,\tau}A_i)$ and
$\overline B_{i,\tau}=(\Delta_{i,\tau}A_i)^{-1}\!\big(\exp(\Delta_{i,\tau}A_i)-I\big)\,(\Delta_{i,\tau}s_B(\widetilde U_\tau))$.

\newpage

\subsection{Derivation of the NRDE Interval Form}
\label{app:nrde_interval_derivation}
 
We derive the interval-driven dynamics stated in Section~\ref{sec:sig_ssm_continuous}, starting from the linear CDE \eqref{eq:sigssm_linear_cde} under the ZOH extension \eqref{eq:sigssm_U_path}.
 
Since $\widetilde U(t)=\widetilde U_\tau$ is constant on $[r_{\tau-1},r_\tau)$, the drivers defined in \eqref{eq:sigssm_drivers_mamba_style} are affine on each interval with constant derivatives:
\begin{equation}
\frac{d}{dt}\omega^{\widetilde U}_{i;t} = \sigma(\widetilde\alpha_i^\top\widetilde U_\tau+\widetilde\beta_i),
\qquad
\frac{d}{dt}\xi^{\widetilde U}_{i;t} = \widetilde U_\tau\,\widetilde U_\tau^i\,\sigma(\widetilde\alpha_i^\top\widetilde U_\tau+\widetilde\beta_i),
\qquad t\in[r_{\tau-1},r_\tau).
\end{equation}
Substituting into \eqref{eq:sigssm_linear_cde} gives the ODE on the interval
\begin{equation}
\frac{d}{dt}H^{\widetilde U}_{i;t}
= A_i H^{\widetilde U}_{i;t}\,\sigma(\widetilde\alpha_i^\top\widetilde U_\tau+\widetilde\beta_i)
+ W_B\,\widetilde U_\tau\,\widetilde U_\tau^i\,\sigma(\widetilde\alpha_i^\top\widetilde U_\tau+\widetilde\beta_i).
\end{equation}
Defining the interval increments
\begin{equation}
\Lambda^{\widetilde U}_{i,\tau}
:= \begin{pmatrix}\Delta\omega^{\widetilde U}_{i,\tau}\\\Delta\xi^{\widetilde U}_{i,\tau}\end{pmatrix}
= \begin{pmatrix}\sigma(\widetilde\alpha_i^\top\widetilde U_\tau+\widetilde\beta_i)\,\Delta r_\tau\\
\widetilde U_\tau\,\widetilde U_\tau^i\,\sigma(\widetilde\alpha_i^\top\widetilde U_\tau+\widetilde\beta_i)\,\Delta r_\tau\end{pmatrix},
\qquad \Delta r_\tau := r_\tau - r_{\tau-1},
\end{equation}
and writing the ODE in the form $\frac{d}{dt}H^{\widetilde U}_{i;t} = [A_i H^{\widetilde U}_{i;t},\,W_B]\,\frac{1}{\Delta r_\tau}\Lambda^{\widetilde U}_{i,\tau}$, integration over $[r_{\tau-1},t]$ yields
\begin{equation}
H^{\widetilde U}_{i;t}
= H^{\widetilde U}_{i;r_{\tau-1}}
+ \int_{r_{\tau-1}}^{t}
\bigl[A_i H^{\widetilde U}_{i;s},\;W_B\bigr]\,
\frac{1}{\Delta r_\tau}\Lambda^{\widetilde U}_{i,\tau}\,ds,
\qquad t\in(r_{\tau-1},r_\tau),
\end{equation}
which is equation \eqref{eq:sigssm_nrde_interval_integral_33} in the main text. The term $\Delta\omega^{\widetilde U}_{i,\tau}=\sigma(\widetilde\alpha_i^\top\widetilde U_\tau+\widetilde\beta_i)\Delta r_\tau$ depends on the input content $\widetilde U_\tau$, making the internal-time allocation content-dependent rather than fixed as in standard NRDEs.

\subsection{Proof of Proposition~\ref{prop:th_mixing_tradeoff}}
\label{app:expressivity_proof}

\paragraph{Closure of a single Mamba block.}
For a generic input path $\gamma$, the uniform closure of a single
diagonal Mamba block (Cor.~4.4 of \citet{cirone_theoretical_2025})
takes the form
\begin{equation}
\mathcal{F}^{(1)}_{\mathrm{Mamba}}[\mathcal{X}] :=
\overline{\Bigl\{
(\gamma,t)\mapsto
\textstyle\sum_{i=1}^{d}\psi_i(\omega^{\gamma,i}_t)
+ \sum_{i=1}^{d}\int_0^t
\phi_i(\omega^{\gamma,i}_t - \omega^{\gamma,i}_s)\,d\xi^{\gamma,i}_s
\Bigr\}_{\gamma\in\mathcal{X}}},
\label{eq:mamba_closure_app}
\end{equation}
with continuous $\psi_i,\phi_i:\mathbb{R}\to\mathbb{R}$ (notation
as in \citet{cirone_theoretical_2025}). Each summand depends on
$\gamma$ only through scalar gate paths $\omega^{\gamma,i}$ and
$ds$-driven integrators $\xi^{\gamma,i}$; in particular, no
$d\gamma$-driven iterated integral appears.

\paragraph{Proof of (i).}
By construction, the token $U_\tau\in\mathbb{R}^{D_{\mathrm{tot}}}$
contains every log-signature coordinate of $\mathcal{W}_k(X)_\tau$
up to depth $m_k\geq m$ (Eq.~\ref{eq:sigssm_token}). After fixing
a Lie basis on the truncated free Lie algebra, the assumed factorisation
of $F$ gives $F(X) = \tilde f(U_\tau(X))$ for some continuous
$\tilde f:\mathbb{R}^{D_{\mathrm{tot}}}\to\mathbb{R}$. The map
$X\mapsto U_\tau(X)$ is continuous and $\mathcal{X}$ is compact, so
the image
$\mathcal{U} := \{U_\tau(X) : X\in\mathcal{X}\}\subset
\mathbb{R}^{D_{\mathrm{tot}}}$ is compact. Take state dimension
$d\geq D_{\mathrm{tot}}$ and $W_{\mathrm{in}}\in\mathbb{R}^{d\times
D_{\mathrm{tot}}}$ injective; the image
$\widetilde{\mathcal{U}} := W_{\mathrm{in}}\mathcal{U}$ is also
compact.

We adopt the small-step parameterisation
$\overline B_{i,\tau} = \Delta_{i,\tau}\, s_B(\widetilde U_\tau)$,
implemented in Mamba \citep{gu_mamba_2024} and used in the linear-CDE
bridge of \citet{cirone_theoretical_2025} (cf.\ Section~\ref{sec:sig_ssm_continuous}).
Here $s_B(\widetilde U) = W_B\widetilde U$ and $s_C(\widetilde U)
= W_C\widetilde U$ are linear in $\widetilde U$ with
learnable parameters $W_B, W_C$, while $\overline B_{i,\tau}$ and
$s_C(\widetilde U_\tau)$ vary in $\tau$ through their input
dependence. Since softplus is continuous and strictly positive and
$\widetilde{\mathcal{U}}$ is compact, $\Delta_{i,\tau} =
\mathrm{softplus}(\alpha_i^\top\widetilde U_\tau + \beta_i)$ is
uniformly bounded above and below by constants
$\Delta_{\max}\geq\Delta_{\min}>0$, so $\overline B_{i,\tau}$ is
uniformly bounded. Writing
\[
A_i = -\operatorname{diag}(\lambda_{i,1},\dots,\lambda_{i,N}),
\qquad \lambda_{i,k}\geq\lambda_{\min}>0,
\]
yields
\[
\overline A_{i,\tau}
= \operatorname{diag}\!\big(e^{-\lambda_{i,1}\Delta_{i,\tau}},
\dots,e^{-\lambda_{i,N}\Delta_{i,\tau}}\big),
\qquad
\|\overline A_{i,\tau}\|
\leq e^{-\lambda_{\min}\Delta_{\min}}
\xrightarrow{\lambda_{\min}\to\infty} 0,
\]
uniformly on $\widetilde{\mathcal{U}}$.

The recurrence
$H_{i,\tau} = \overline A_{i,\tau} H_{i,\tau-1}
+ \overline B_{i,\tau}\,\widetilde U_\tau^i$
keeps $H$ uniformly bounded; taking $\lambda_{\min}\to\infty$, 
\[
H_{i,\tau} \;\to\; \Delta_{i,\tau}\,W_B\widetilde U_\tau\,
\widetilde U_\tau^i,
\]
being a continuous function of $\widetilde U_\tau$ alone. The limiting
per-anchor readout vector
$\{y_{i,\tau} := s_C(\widetilde U_\tau)\, H_{i,\tau}\}_{i=1}^d$
is then a smooth map
$g:\widetilde{\mathcal{U}}\to\mathbb{R}^{d}$, polynomial in
$\widetilde U_\tau$ modulated by the softplus gate.

For $d\geq D_{\mathrm{tot}}$, $g$ is a smooth map between spaces of
matching ambient dimension. There exist parameters
$(W_B, W_C, \alpha, \beta, A)$ such that $g$ is injective on the compact set $\widetilde{\mathcal{U}}$.
Combined with injectivity of $W_{\mathrm{in}}$, the composed map
$U_\tau\mapsto\{y_{i,\tau}\}_{i=1}^d$ is injective on $\mathcal{U}$.
Standard MLP universality on the compact image then approximates
$\tilde f$ uniformly, hence $F$, to error $\varepsilon$.

\paragraph{Proof of (ii).}
By Cor.~4.4 of \citet{cirone_theoretical_2025}, every feature in
$\mathcal{F}^{(1)}_{\mathrm{Mamba}}[\mathcal{X}]$ factorises through
scalar gates $\omega^{X,i}$ and $ds$-driven integrators $\xi^{X,i}$
as in \eqref{eq:mamba_closure_app}; in Mamba, $d\xi^{X,i}_s = X_s 
X^i_s\,\sigma(\widetilde\alpha_i\!\cdot\! X_s + \widetilde\beta_i)\,ds$
is $ds$-driven, depending on $X$ only through pointwise values $X_s$.
A pointwise post-scan MLP composed with such block outputs produces
only continuous functions of $\mathcal{F}^{(1)}_{\mathrm{Mamba}}[\mathcal{X}]$;
since closure features are $ds$-driven and scalar-gated, no $dX$-driven
iterated integral lies in this composed family. The L\'evy area
\[
A^{12}_t = \tfrac{1}{2}\!\int_0^t \big(X^1_s\,dX^2_s
- X^2_s\,dX^1_s\big)
\]
is a $dX$-driven iterated integral, antisymmetric under
$1\leftrightarrow 2$; no finite combination of $ds$-driven features
in the closure can reproduce it, so
$A^{12}\notin\mathcal{F}^{(1)}_{\mathrm{Mamba}}[\mathcal{X}]$.

\paragraph{Proof of (iii).}
Direct from Sec.~C.2.1 of \citet{cirone_theoretical_2025}: chained
diagonal blocks recover progressively richer iterated-integral
features as the chain depth grows, populating the state at each
anchor with the cross-channel structure that is unavailable at
chain depth one. The asymptotic universality of
\citet{cirone_theoretical_2025} (Prop.~4.5) requires chain depth
growing with the maximum iterated-integral depth being recovered,
whereas LogSig-SSM realises any such feature in a single block
provided $\exists\, k: m_k\geq m$.
\hfill$\square$

\newpage
\section{Memory and Runtime}
\label{app:memory_runtime}
Table~\ref{tab:uea_compute} summarises the compute requirements of each model on the UEA classification benchmarks of Section~\ref{sec:uea_results}. Baseline numbers are taken from \cite{walker_log_2024,boyer_learning_2025}, and we report parameter count $|\theta|$, GPU memory usage (MB), and wall-clock time per 1000 training steps (s), all measured on a single NVIDIA RTX 4090.

LogSig-SSM is the most compute-efficient approach. It has the lowest memory usage on all six datasets, often an order of magnitude below Mamba, and it is the fastest (or joint fastest) model on all six datasets, with large gains over the continuous-time baselines. These gains follow from the architecture: log-signature tokenisation shortens the effective sequence before the selective scan, reducing both the number of recurrent updates and the activations stored for backpropagation. On the two widest datasets (Heartbeat, 61 channels; MotorImagery, 64 channels) the depth-two tokens are 1{,}891- and 4{,}160-dimensional, which raises the parameter count of the input embedding but leaves the footprint at 54\,MB and 78\,MB.

Tokenisation is a one-time offline computation and is not part of the training graph. Table~\ref{tab:uea_sigtime} reports its cost with \texttt{iisignature} \cite{reizenstein2018iisignaturelibraryefficientcalculation} on CPU, with the Lie basis built once per dataset rather than once per series (the emitted features are identical): all six datasets are tokenised in 6\,s in total, about $1\%$ of one training run per dataset. Tokenisation is linear in the sequence length and in the number of series, parallel over both, and the work at each anchor is set by $\max_k L_k$ and $\max_k m_k$ rather than by the length of the stream, so it also runs online: tokens are emitted every $s$ samples with latency bounded by $s$ and a buffer of $\max_k L_k$ samples, and the optional global branch can be maintained at constant cost per anchor through Chen's identity. The binding constraint is channel width, since the depth-$m$ token has $O(d_x^{m}/m)$ coordinates. 

\begin{table}[h]
\centering
\caption{Compute requirements for the UEA classification tasks of Section~\ref{sec:experiments}. \textbf{Bold} indicates the lowest value per dataset for parameter count ($|\theta|$), GPU memory usage (MB), and time (seconds) per 1000 steps. All statistics were recorded on an NVIDIA RTX 4090.} 
\label{tab:uea_compute}
\scriptsize
\setlength{\tabcolsep}{2.8pt}
\renewcommand{\arraystretch}{1.05}
\resizebox{\textwidth}{!}{%
\begin{tabular}{llccccccccccc}
\toprule
 &  & NRDE & NCDE & Log-NCDE & LRU & S5 & Mamba & S6 & LinOSS-IMEX & LinOSS-IM & D-LinOSS & LogSig-SSM \\
\midrule
\multirow{3}{*}{EigenWorms}
 & $|\theta|$ & 105110 & 166789 & 37977 & 101129 & 22007 & 27381 & \textbf{15045} & 26119 & 134279 & 134279 & 52933 \\
 & mem.      & 2506   & 2484   & 2510  & 10716  & 6646  & 13486 & 7922  & 6556  & 3488   & 3488   & \textbf{362}\\
 & time      & 5386   & 24595  & 1956  & 94     & 31    & 122   & 68    & 37    & 14     & 14     & \textbf{4}\\
\midrule
\multirow{3}{*}{SCP1}
 & $|\theta|$ & 117187 & 166274 & 91557 & 25892  & 226328 & 184194 & \textbf{24898} & 447944 & 991240 & 992776 & 310786\\
 & mem.      & 716    & 694    & 724   & 960    & 1798   & 1110   & 904   & 4768   & 4772   & 4790   & \textbf{202}\\
 & time      & 1014   & 973    & 635   & 9      & 17     & 7      & \textbf{3}     & 42     & 38     & 38     & \textbf{3}\\
\midrule
\multirow{3}{*}{SCP2}
 & $|\theta|$ & 200707 & 182914 & 36379 & 26020  & \textbf{5652}  & 356290 & 26018 & 448072 & 399112 & 399496 & 352642\\
 & mem.      & 712    & 692    & 714   & 954    & 762   & 2460   & 1222  & 4772   & 2724   & 2736   & \textbf{76}\\
 & time      & 1404   & 1251   & 583   & 9      & 9     & 32     & 7     & 55     & 22     & 22     & \textbf{2}\\
\midrule
\multirow{3}{*}{Ethanol}
 & $|\theta|$ & 93212  & 133252 & 31452 & 76522  & 76214 & 1032772 & \textbf{5780}  & 70088  & 6728   & 71112  & 158516\\
 & mem.      & 712    & 692    & 710   & 1988   & 1520  & 4876    & 938   & 4766   & 1182   & 4774   & \textbf{82}\\
 & time      & 2256   & 2217   & 2056  & 16     & 9     & 255     & \textbf{4}    & 48     & 8      & 37     & \textbf{4}\\
\midrule
\multirow{3}{*}{Heartbeat}
 & $|\theta|$ & 15657742 & 1098114 & 168320 & 338820 & 158310 & 1034242 & 6674 & 29444 & 10936 & \textbf{4356} & 396546\\
 & mem.      & 6860     & 1462    & 2774   & 1466   & 1548   & 1650    & 606  & 922   & 928   & 672  & \textbf{54}\\
 & time      & 9539     & 1177    & 826    & 8      & 11     & 34      & 4    & 4     & 7     & 4    & \textbf{2}\\
\midrule
\multirow{3}{*}{Motor}
 & $|\theta|$ & 1134395 & 186962 & 81391 & 107544 & \textbf{17496} & 228226 & 52802 & 106024 & 91844 & 20598 & 70034\\
 & mem.      & 4552    & 4534   & 4566  & 8646   & 4616  & 3120   & 4056  & 12708  & 4510  & 4518  & \textbf{78}\\
 & time      & 7616    & 3778   & 730   & 51     & 16    & 35     & 34    & 128    & 11    & 20    & \textbf{5}\\
\bottomrule
\end{tabular}}
\end{table}

\begin{table}[h]
\centering
\caption{Log-signature tokenisation cost for LogSig-SSM on the UEA datasets, using \texttt{iisignature} \cite{reizenstein2018iisignaturelibraryefficientcalculation} on CPU: wall-clock seconds for the whole dataset (mean\,$\pm$\,std over five seeds) and the same time as a percentage of one training run.}
\label{tab:uea_sigtime}
\small
\begin{tabular}{lrrcc}
\toprule
Dataset & $d_x$ & $T$ & Tokenisation (s) & \% of one training run \\
\midrule
EigenWorms  & 6  & 17,984 & $1.587 \pm 0.004$ & 1.2 \\
SCP1        & 6  & 896    & $0.887 \pm 0.051$ & 0.8 \\
SCP2        & 7  & 1,152  & $0.052 \pm 0.026$ & 0.1 \\
Ethanol     & 3  & 1,751  & $0.113 \pm 0.031$ & 0.1 \\
Heartbeat   & 61 & 405    & $0.959 \pm 0.026$ & 0.9 \\
Motor       & 64 & 3,000  & $2.469 \pm 0.103$ & 1.5 \\
\midrule
All six     &    &        & $6.07$            & \\
\bottomrule
\end{tabular}
\end{table}

\newpage
\newpage
\section{Ablation Studies on UEA}
\label{app:ablations_uea}
Table~\ref{tab:unified_ablation} gives the per-dataset results behind Table~\ref{tab:ablation_ladder}. Every variant uses the same two-stage Optuna protocol and budget as LogSig-SSM (Appendix~\ref{app:hyperparams}), with backbone-appropriate ranges substituted where needed.

\textbf{Tokeniser variants (Mamba backbone fixed).}
\begin{itemize}
\item \textbf{Mamba (no tokenisation):} the raw input sequence is processed directly by Mamba, with no compression.
\item \textbf{ConvDepthwise-Mamba:} a 1-D depthwise (per-channel) convolutional layer over each window, projecting to the same token dimension as LogSig-SSM. This isolates compression with no cross-channel mixing.
\item \textbf{ConvMixing-Mamba:} a standard 1-D convolutional layer over each window, adding linear cross-channel mixing on top of compression.
\item \textbf{Catch24-Mamba}: Catch24 statistical summaries per window, i.e.\ the 22 catch22 features~\cite{lubba_catch22_2019} together with the mean and standard deviation of the window, computed separately for each channel.
\item \textbf{Wst-Mamba:} 1-D wavelet-scattering coefficients (Kymatio \texttt{Scattering1D} with Morlet wavelets) per window, capturing multi-scale spectral content.
\end{itemize}
Wst-Mamba is the only variant ahead of LogSig-SSM on any dataset, on EthanolConcentration ($53.5$ against $51.4$, within one standard deviation).

\textbf{Backbone variants (log-signature tokeniser fixed).} The same log-signature tokens are fed to Mamba (selective recurrence), S5 (LTI recurrence) and a standard Transformer (attention). LogSig-SSM (Mamba) leads on each of the six datasets, with the largest margin on EthanolConcentration, where discriminative information is concentrated in content-defined regions of the sequence.

\begin{table}[H]
\centering
\caption{UEA ablations (\% accuracy $\uparrow$, mean\,$\pm$\,std over five splits). Top: tokeniser variants with the Mamba backbone fixed. Bottom: backbone variants with the log-signature tokeniser fixed. All variants share the same window positions, token dimensions where applicable, and Optuna budget (Appendix~\ref{app:hyperparams}). Bold\,=\,best per column.}
\label{tab:unified_ablation}
\scriptsize
\setlength{\tabcolsep}{3.5pt}
\renewcommand{\arraystretch}{0.9}
\begin{tabular}{lccccccc}
\toprule
Model & EigenWorms & SCP1 & SCP2 & Ethanol & Heartbeat & Motor & Avg \\
\midrule
\multicolumn{8}{l}{\textit{Tokeniser variants (Mamba backbone)}} \\
Mamba (raw input)     & $70.9{\scriptstyle\pm15.8}$ & $80.7{\scriptstyle\pm1.4}$ & $48.2{\scriptstyle\pm3.9}$ & $27.9{\scriptstyle\pm4.5}$ & $76.2{\scriptstyle\pm3.8}$ & $47.7{\scriptstyle\pm4.5}$ & $58.6$ \\
ConvDepthwise-Mamba   & $78.4{\scriptstyle\pm3.0}$  & $83.9{\scriptstyle\pm5.8}$ & $45.2{\scriptstyle\pm3.8}$ & $35.8{\scriptstyle\pm12.5}$ & $68.7{\scriptstyle\pm4.3}$ & $48.6{\scriptstyle\pm2.4}$ & $60.1$ \\
ConvMixing-Mamba      & $85.9{\scriptstyle\pm11.8}$ & $81.4{\scriptstyle\pm3.4}$ & $51.0{\scriptstyle\pm3.1}$ & $33.6{\scriptstyle\pm5.6}$ & $74.9{\scriptstyle\pm10.3}$ & $50.6{\scriptstyle\pm3.8}$ & $62.9$ \\
Catch24-Mamba         & $72.5{\scriptstyle\pm11.0}$ & $82.4{\scriptstyle\pm4.7}$ & $44.5{\scriptstyle\pm4.2}$ & $34.5{\scriptstyle\pm5.7}$ & $66.8{\scriptstyle\pm6.1}$ & $52.8{\scriptstyle\pm7.0}$ & $58.9$ \\
Wst-Mamba             & $89.1{\scriptstyle\pm12.5}$ & $85.9{\scriptstyle\pm1.2}$ & $53.1{\scriptstyle\pm5.6}$ & $\mathbf{53.5{\scriptstyle\pm4.0}}$ & $72.9{\scriptstyle\pm6.3}$ & $51.0{\scriptstyle\pm9.5}$ & $67.6$ \\
\midrule
\multicolumn{8}{l}{\textit{Backbone variants (log-signature tokeniser)}} \\
LogSig-S5             & $85.0{\scriptstyle\pm4.5}$  & $87.5{\scriptstyle\pm2.3}$ & $54.4{\scriptstyle\pm3.8}$ & $25.6{\scriptstyle\pm2.7}$ & $71.6{\scriptstyle\pm8.5}$ & $53.7{\scriptstyle\pm4.7}$ & $63.0$ \\
LogSig-Transformer    & $93.1{\scriptstyle\pm4.9}$  & $83.9{\scriptstyle\pm3.8}$ & $53.7{\scriptstyle\pm3.1}$ & $36.0{\scriptstyle\pm3.4}$ & $72.1{\scriptstyle\pm8.0}$ & $53.7{\scriptstyle\pm4.6}$ & $65.4$ \\
\midrule
\textbf{LogSig-SSM (ours)} & $\mathbf{95.3{\scriptstyle\pm3.6}}$ & $\mathbf{88.4{\scriptstyle\pm1.9}}$ & $\mathbf{58.4{\scriptstyle\pm2.9}}$ & $51.4{\scriptstyle\pm6.9}$ & $\mathbf{76.9{\scriptstyle\pm9.7}}$ & $\mathbf{59.9{\scriptstyle\pm3.4}}$ & $\mathbf{71.7}$ \\
\bottomrule
\end{tabular}
\end{table}

\subsection{Signature versus log-signature tokens}
\label{app:sig_vs_logsig}
The truncated signature and log-signature are related by a polynomial bijection, so at equal depth they carry identical information and any continuous functional of one is a continuous functional of the other. What the log-signature gives up is universality for linear readouts, which the nonlinear readout of LogSig-SSM does not need. We use the log-signature because it is the driver with which the models we build on are defined (the log-ODE method underlying NRDEs \cite{morrill_neural_2021} and Log-NCDEs \cite{walker_log_2024}), because it is the redundancy-free coordinate system for that content, and because \texttt{iisignature} accumulates it directly in a Lie basis rather than taking a logarithm of the signature. Table~\ref{tab:sig_vs_logsig} compares the two token types directly in accuracy and tokenisation time on EigenWorms, SCP1 and SCP2, the three datasets on which the second-order terms are load-bearing (Appendix~\ref{app:depth_study}): the log-signature coordinates of the reported configuration are replaced by signature coordinates of the same depth with every other hyperparameter fixed, and since the two coincide at depth one the comparison isolates how the higher levels are coordinatised. The log-signature leads on all three, by $3.1$ points on average, and is about twice as fast to compute.
\begin{table}[H]
\centering
\small
\caption{Signature against log-signature tokens at the same depth and configuration (Table~\ref{tab:hparams_sig_mamba_sig}): test accuracy (\%, mean over five seeds) and whole-dataset tokenisation time with \texttt{iisignature} (s, best of three).}
\label{tab:sig_vs_logsig}
\setlength{\tabcolsep}{6pt}
\begin{tabular}{lcccc}
\toprule
 & \multicolumn{2}{c}{Accuracy (\%)} & \multicolumn{2}{c}{Tokenisation (s)} \\
\cmidrule(lr){2-3}\cmidrule(lr){4-5}
Dataset & Log-signature & Signature & Log-signature & Signature \\
\midrule
EigenWorms & $\mathbf{95.3}$ & $94.4$ & $1.56$ & $3.73$ \\
SCP1       & $\mathbf{88.4}$ & $85.7$ & $0.85$ & $1.34$ \\
SCP2       & $\mathbf{58.4}$ & $52.9$ & $0.04$ & $0.05$ \\
\midrule
Average / total & $\mathbf{80.7}$ & $77.6$ & $2.45$ & $5.12$ \\
\bottomrule
\end{tabular}
\end{table}

\section{Truncation Depth: Matched-Budget Comparison}
\label{app:depth_study}
A level-one log-signature over a window is its increment and carries no cross-channel content, second-order terms are where cross-channel interactions (L\'evy areas) enter the token. To decide the depth without per-dataset selection, we ran, for each dataset, a fresh 100-trial Optuna search over the tokeniser hyperparameters (window lengths, number of scales, stride) with the backbone fixed at the reported configuration (Table~\ref{tab:hparams_sig_mamba_sig}) and the depth forced to one or to two on every window, so that both arms receive a matched budget at their own best tokeniser configuration. Table~\ref{tab:depth_study} reports test accuracy over five seeds.

\begin{table}[H]
\centering
\caption{Matched-budget comparison of truncation depth $m_k=1$ against $m_k=2$ on every window (UEA accuracy \%; mean\,$\pm$\,std over five seeds). $\Delta$ is the gain of depth two. The depth-two column is the configuration reported in Table~\ref{tab:uea_main}.}
\label{tab:depth_study}
\small
\begin{tabular}{lcccc}
\toprule
Dataset & $d_x$ & $m_k=1$ & $m_k=2$ & $\Delta$ \\
\midrule
EigenWorms & 6  & $88.1{\scriptstyle\pm6.4}$ & $95.3{\scriptstyle\pm3.6}$ & $+7.2$ \\
SCP1      & 6  & $84.1{\scriptstyle\pm1.4}$ & $88.4{\scriptstyle\pm1.9}$ & $+4.3$ \\
SCP2      & 7  & $55.6{\scriptstyle\pm4.9}$ & $58.4{\scriptstyle\pm2.9}$ & $+2.8$ \\
Ethanol   & 3  & $51.0{\scriptstyle\pm4.2}$ & $51.4{\scriptstyle\pm6.9}$ & $+0.4$ \\
Heartbeat & 61 & $78.3{\scriptstyle\pm9.4}$ & $76.9{\scriptstyle\pm9.7}$ & $-1.4$ \\
Motor     & 64 & $59.6{\scriptstyle\pm7.5}$ & $59.9{\scriptstyle\pm3.4}$ & $+0.3$ \\
Average (UEA) & & $69.5$ & $71.7$ & $+2.3$ \\
\bottomrule
\end{tabular}
\end{table}

The two directions do not cost the same. Forcing depth one costs $2.3$ points on average, with losses of $7.2$, $4.3$ and $2.8$ on the three datasets with few channels, where the second-order terms are load-bearing: on these three datasets the depth-one arm is level with the strongest alternative tokeniser (Wst-Mamba, Table~\ref{tab:unified_ablation}; $-1.0$, $-1.8$, $+2.5$), while depth two leads it by $6.2$, $2.5$ and $5.3$ points. Forcing depth two costs at most $1.4$ points on any dataset and about $0.2$ points on average relative to selecting the better depth per dataset, within seed noise everywhere. On Heartbeat and MotorImagery the depth-two token per branch ($1{,}891$ and $2{,}080$ coordinates) is $6.6\times$ and $7.9\times$ wider than the training set, so the second-order coordinates are under-determined and sit on a plateau that we expect to relax with more data. On Ethanol the discriminative structure is carried by the increments, and the extra depth neither helps nor hurts. Depth two is therefore a safe default, its cost being tokenisation time on wide inputs (Table~\ref{tab:uea_sigtime}), and we report it throughout.

\section{Signal Visualisation}
\label{app:signals}
We visualise representative signals from the six UEA classification tasks used in our evaluation (Figure~\ref{fig:uea_signals}). The goal is to highlight the qualitative differences in temporal structure across datasets, which helps explain the heterogeneous empirical performance observed in Table~\ref{tab:uea_main}.

The datasets separate naturally into two regimes. \textbf{Five datasets} (SelfRegulationSCP1, SelfRegulationSCP2, MotorImagery, Heartbeat, EigenWorms) exhibit \emph{high-frequency, oscillatory, or noise-dominated dynamics}, where discriminative information is often localised in short-time patterns or frequency content. In contrast, \textbf{EthanolConcentration} consists of \emph{smooth absorption spectra} with a single broad peak, where class differences manifest as \emph{global shifts in shape and location} over long time horizons.

This distinction is important for interpreting our results. Multi-scale log-signature tokens summarise the \emph{geometry of the path} over extended intervals, capturing global shape characteristics and cross-channel relationships. This is well matched to signals such as EthanolConcentration, where discriminative structure is inherently low-frequency and global. This supports the interpretation that the gains of LogSig-SSM arise from its ability to capture \emph{multi-scale path geometry}, rather than compression alone.

\begin{figure}[h]
    \centering
    \includegraphics[width=\linewidth]{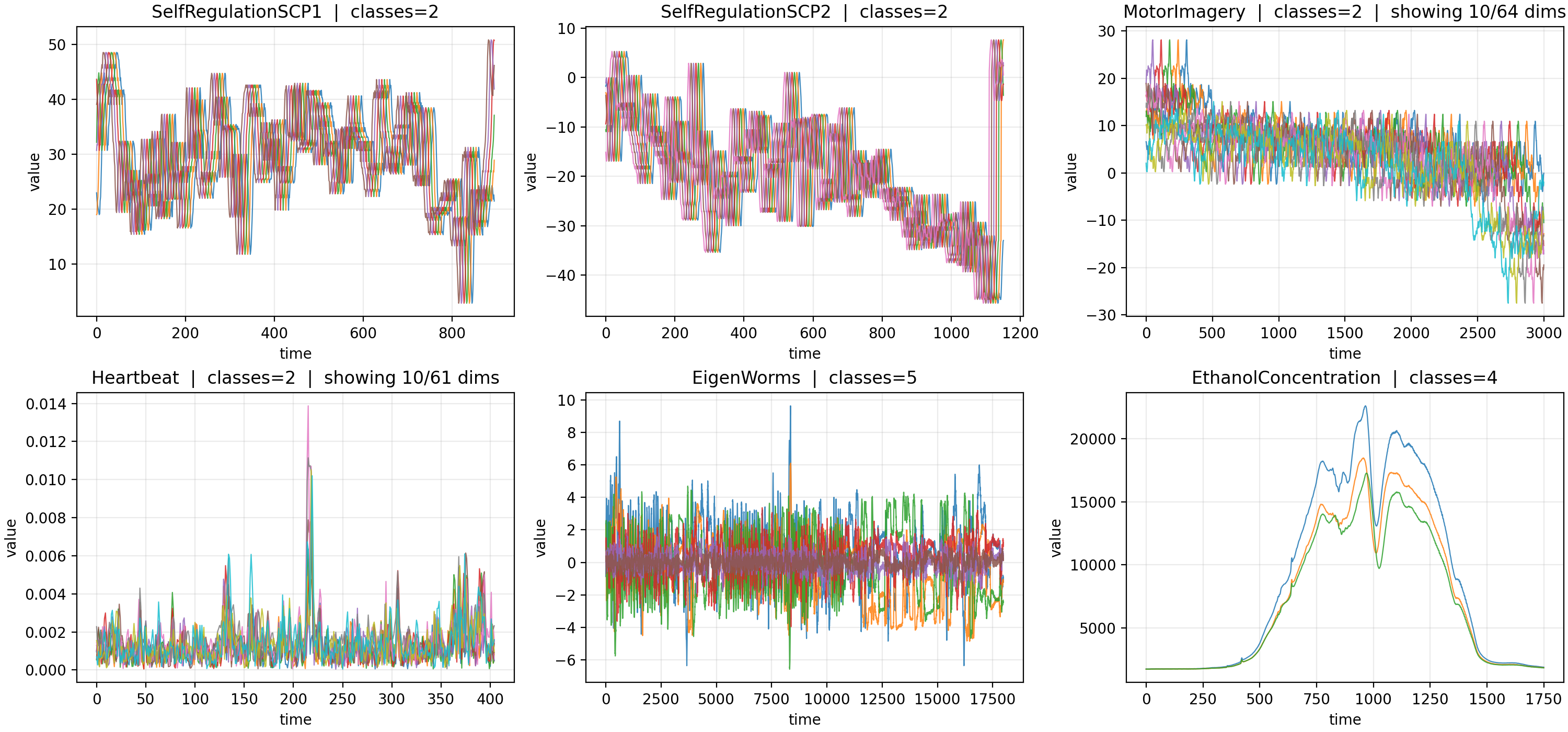}
    \caption{
    Representative time-series from the six UEA datasets. EthanolConcentration exhibits smooth, low-frequency structure with global shape variation, while the other datasets are dominated by high-frequency or noisy dynamics.
    }
    \label{fig:uea_signals}
\end{figure}

\section{Selected Hyperparameters}
\label{app:hyperparams}
\subsection{Hyperparameter optimisation protocol}
\label{app:hyperparams_procedure}

We optimise log-signature tokenisation and the Mamba backbone using a two-stage Optuna~\cite{akiba_optuna_2019} protocol with $100$ trials per stage. Stage 1 searches the tokenisation space with the backbone fixed at a small reference Mamba ($d{=}64$, $2$ layers, $N{=}32$, convolution width $3$, expansion $2$). Stage 2 searches the Mamba backbone jointly with a local neighbourhood of the stage-1 tokenisation. The same protocol and budget are used for the LogSig-S5 and LogSig-Transformer ablations, with backbone-appropriate hyperparameter ranges substituted in Stage 2.

\paragraph{Stage 1 tokenisation ranges.}
Stride $s \in \{2, \dots, 199\}$; number of multi-scale branches $K \in \{1, \dots, 5\}$, bounded by the number of distinct admissible window lengths; window lengths $L_k$ drawn from a $T$-dependent geometric menu with values $T/r$ for $r \in \{4, \dots, 64\}$, each lower-bounded by a small constant; an optional global branch (window equal to the full sequence). On UEA the truncation depth is fixed at $m_k=2$ on every window. On PPG-DaLiA, Weather and PhysioNet the per-branch depth was searched in $\{1, \dots, m_{\max}\}$ with $m_{\max}{=}4$ if $d_x \leq 20$ and $m_{\max}{=}2$ otherwise, and every selected configuration contains a depth-two branch.

\paragraph{Stage 2 local tokenisation neighbourhood.}
Stride is scaled by $\{0.5, 0.75, 1.0, 1.5, 2.0\}$ around the stage-1 best; the number of branches $K^* \pm 1$; per-branch window lengths are scaled by $\{0.5, 0.75, 1.0, 1.5, 2.0\}$; where the depth is searched, depths within $m_k^* \pm 1$, clamped to $[1, m_{\max}]$.

\paragraph{Stage 2 backbone ranges (Mamba).}
Learning rate $\in \{10^{-3}, 10^{-4}, 10^{-5}\}$; model dimension $d \in \{16, 64, 128\}$; blocks $\in \{2, 4, 6\}$; state size $N \in \{16, 64, 256\}$; convolution width $\in \{2, 3, 4\}$; expansion $\in \{1, 2, 4\}$; per-feature normalisation $\in \{\text{True}, \text{False}\}$. For LogSig-S5, the layer/block/state-size ranges are searched in place of the Mamba-specific hyperparameters; for LogSig-Transformer, number of layers, attention heads, feedforward dimension, dropout, and learning rate are searched.

\paragraph{Comparison to prior protocols.}
For fairness with prior work, we adopt the same Mamba hyperparameter grid as the Log-NCDE protocol~\cite{walker_log_2024} and its models following this protocol~\cite{rusch_oscillatory_2025,boyer_learning_2025}: learning rate, time-channel inclusion, hidden dimension, number of layers, state dimension, convolution width, and expansion factor, with the same value ranges on each axis. \citet{walker_log_2024} search this grid exhaustively, evaluating $3^6 \cdot 2 = 1458$ Mamba configurations (and a comparable $1296$-point grid for Log-NCDE itself). We instead use Optuna to evaluate $100$ Mamba configurations in Stage~2 over the same axes and ranges. Two consequences follow. First, our Mamba backbone is searched over the same space as the strongest published Mamba results on these benchmarks, so the comparison in Table~\ref{tab:uea_main} is structurally fair. Second, our budget is smaller in raw trial count but is allocated adaptively rather than uniformly. Stage~1 separately searches the tokenisation space, so the total budget per dataset is $200$ trials, of which $100$ involve backbone selection.

\paragraph{Search-free rule.} The search is not required to beat raw SSMs. With the Mamba backbone fixed at the reported LogSig-SSM configuration of each dataset and every tokeniser hyperparameter set from the sequence length alone ($s=T/100$, lengths $[T/25,\,T/100]$, depths $[2,1]$), applied unchanged to all six datasets, LogSig-SSM reaches $64.0\%$ average UEA accuracy (Table~\ref{tab:rule_sweep}), above S5 ($61.8$), LRU ($61.7$) and Mamba ($58.6$), and searching the tokeniser adds the remaining $7.7$ points. The full per-axis study is provided with the code.

\begin{table}[H]
\centering
\small
\caption{Search-free rule on UEA (accuracy \%, mean\,$\pm$\,std over five seeds): tokeniser hyperparameters set from the sequence length alone, Mamba backbone as in Table~\ref{tab:hparams_sig_mamba_sig}, against the searched configuration and the raw SSM baselines of Table~\ref{tab:uea_main}.}
\label{tab:rule_sweep}
\setlength{\tabcolsep}{4pt}
\begin{tabular}{lccccccc}
\toprule
Model & EigenWorms & SCP1 & SCP2 & Ethanol & Heartbeat & Motor & Avg \\
\midrule
LogSig-SSM (searched) & $95.3{\scriptstyle\pm3.6}$ & $88.4{\scriptstyle\pm1.9}$ & $58.4{\scriptstyle\pm2.9}$ & $51.4{\scriptstyle\pm6.9}$ & $76.9{\scriptstyle\pm9.7}$ & $59.9{\scriptstyle\pm3.4}$ & $71.7$ \\
LogSig-SSM (rule, no search) & $92.5{\scriptstyle\pm6.8}$ & $85.5{\scriptstyle\pm3.5}$ & $51.9{\scriptstyle\pm8.9}$ & $32.3{\scriptstyle\pm2.1}$ & $72.2{\scriptstyle\pm7.3}$ & $49.7{\scriptstyle\pm4.8}$ & $64.0$ \\
S5    & $81.1{\scriptstyle\pm3.7}$ & $89.9{\scriptstyle\pm4.6}$ & $50.5{\scriptstyle\pm2.6}$ & $24.1{\scriptstyle\pm4.3}$ & $77.7{\scriptstyle\pm5.5}$ & $47.7{\scriptstyle\pm5.5}$ & $61.8$ \\
LRU   & $87.8{\scriptstyle\pm2.8}$ & $82.6{\scriptstyle\pm3.4}$ & $51.2{\scriptstyle\pm3.6}$ & $21.5{\scriptstyle\pm2.1}$ & $78.4{\scriptstyle\pm6.7}$ & $48.4{\scriptstyle\pm5.0}$ & $61.7$ \\
Mamba & $70.9{\scriptstyle\pm15.8}$ & $80.7{\scriptstyle\pm1.4}$ & $48.2{\scriptstyle\pm3.9}$ & $27.9{\scriptstyle\pm4.5}$ & $76.2{\scriptstyle\pm3.8}$ & $47.7{\scriptstyle\pm4.5}$ & $58.6$ \\
\bottomrule
\end{tabular}
\end{table}

\subsection{Log-signature tokeniser}
\label{app:hyperparams_signature}

\begin{table}[H]
\centering
\footnotesize
\caption{Selected hyperparameters for \textbf{LogSig-SSM}. LR: learning rate; Hidden: model dimension $d$; State: state size $N$; Blocks: number of Mamba blocks; Time: time channel appended to the input; Conv: convolution width; Exp: expansion factor; Norm: per-feature normalisation; Lengths/Depths: window lengths $L_k$ and truncation depths $m_k$; Global: depth of the optional global branch (-- = none).}
\label{tab:hparams_sig_mamba_sig}
\setlength{\tabcolsep}{3pt}
\resizebox{\textwidth}{!}{%
\begin{tabular}{lcccccccclcc}
\toprule
Dataset & LR & Hidden & State & Blocks & Time & Conv & Exp & Norm & Lengths / Depths & Stride & Global \\
\midrule
EigenWorms        & 1e-3 & 64  & 64  & 2 & F & 2 & 1 & T & [281] / [2]              & 21 & -- \\
Ethanol           & 1e-3 & 16  & 256 & 6 & T & 2 & 2 & T & [18, 132] / [2, 2]       & 66 & -- \\
Heartbeat         & 1e-3 & 128 & 64  & 2 & F & 4 & 1 & T & [143] / [2]              & 26 & -- \\
Motor             & 1e-4 & 16  & 16  & 2 & T & 3 & 1 & F & [51, 16] / [2, 2]        & 50 & -- \\
SCP1              & 1e-3 & 128 & 64  & 2 & F & 3 & 2 & T & [56] / [2]               & 5  & 1 \\
SCP2              & 1e-4 & 128 & 16  & 2 & T & 2 & 3 & T & [36] / [2]               & 46 & -- \\
PPG-DaLiA         & 1e-3 & 16  & 128 & 4 & T & 3 & 4 & T & [64, 8, 8] / [2, 2, 1]   & 64 & -- \\
Weather           & 1e-3 & 32  & 32  & 6 & F & 2 & 2 & T & [20, 30, 90] / [1, 2, 1] & 7  & 1 \\
PhysioNet (OI)    & 1e-3 & 64  & 128 & 6 & F & 4 & 2 & T & [6, 2, 2] / [1, 2, 1]    & 1  & -- \\
PhysioNet (no OI) & 5e-4 & 32  & 16  & 4 & F & 4 & 2 & T & [4, 6, 22] / [2, 2, 2]   & 1  & -- \\
\bottomrule
\end{tabular}}
\end{table}

\subsection{Ablation variants}
\label{app:hyperparams_variants}
Tables~\ref{tab:hparams_sig_s5_sig}--\ref{tab:hparams_wst_mamba} list the selected configurations of the LogSig-S5, LogSig-Transformer, Catch24-Mamba and Wst-Mamba variants.

\begin{table}[H]
\centering
\footnotesize
\caption{Selected hyperparameters for \textbf{LogSig-S5} (log-signature tokens + S5). Blocks are reported as layers$\times$blocks.}
\label{tab:hparams_sig_s5_sig}
\setlength{\tabcolsep}{4pt}
\begin{tabular}{lccccccccccc}
\toprule
Dataset & LR & Hidden & State & Blocks & Time & Norm & Lengths & Depths & Stride & Global \\
\midrule
EigenWorms & 1e-3 & 64 & 16  & 4$\times$4 & F & F & [562, 281]      & [2, 2]    & 57 & F \\
Ethanol    & 1e-3 & 16 & 64  & 2$\times$2 & F & T & [109, 43]       & [2, 1]    & 70 & F \\
Heartbeat  & 1e-5 & 64 & 16  & 6$\times$2 & T & T & [14]            & [1]       & 3  & F \\
Motor      & 1e-4 & 64 & 16  & 4$\times$4 & T & F & [93, 250, 125]  & [2,1,2]   & 35 & T \\
SCP1       & 1e-3 & 16 & 256 & 6$\times$4 & F & F & [112, 18]       & [1, 1]    & 30 & F \\
SCP2       & 1e-4 & 16 & 64  & 4$\times$2 & F & T & [18, 144, 192]  & [4,2,3]   & 13 & T \\
\bottomrule
\end{tabular}
\end{table}

\begin{table}[H]
\centering
\footnotesize
\caption{Selected hyperparameters for \textbf{LogSig-Transformer} (log-signature tokens + Transformer). FFN: feed-forward dimension; Global: depth of the optional global branch (-- = none).}
\label{tab:hparams_sig_transformer}
\setlength{\tabcolsep}{2.5pt}
\begin{tabular}{lcccccccccccc}
\toprule
Dataset & LR & Hidden & Heads & FFN & State & Blocks & Time & Norm & Lengths & Depths & Stride & Global \\
\midrule
EigenWorms & 1e-3 & 128 & 4 & 256 & 64 & 2 & T & T & [374, 899]    & [2, 1]    & 55 & -- \\
Ethanol    & 1e-3 & 16  & 8 & 128 & 64 & 6 & T & F & [22, 48, 114] & [1, 1, 1] & 74 & -- \\
Heartbeat  & 1e-5 & 16  & 8 & 256 & 64 & 4 & T & T & [16, 25]      & [1, 1]    & 14 & -- \\
Motor      & 1e-4 & 16  & 8 & 512 & 64 & 4 & T & T & [250, 46]     & [1, 1]    & 46 & -- \\
SCP1       & 1e-4 & 16  & 8 & 512 & 64 & 4 & T & T & [74]          & [1]       & 14 & 1 \\
SCP2       & 1e-3 & 128 & 8 & 512 & 64 & 2 & T & T & [68]          & [2]       & 76 & -- \\
\bottomrule
\end{tabular}
\end{table}

\begin{table}[H]
\centering
\footnotesize
\caption{Selected hyperparameters for \textbf{Catch24-Mamba} (Catch24 tokens).}
\label{tab:hparams_sig_mamba_catch}
\setlength{\tabcolsep}{3pt}
\begin{tabular}{lcccccccccc}
\toprule
Dataset & LR & Hidden & State & Blocks & Time & Conv & Exp & Norm & Lengths & Stride \\
\midrule
EigenWorms & 1e-3 & 64  & 256 & 6 & T & 2 & 4 & T & [70, 749, 499, 70, 35] & 135 \\
Ethanol    & 1e-3 & 128 & 256 & 4 & T & 3 & 3 & T & [109, 350, 13, 109, 350] & 79  \\
Heartbeat  & 1e-5 & 128 & 64  & 2 & T & 3 & 2 & F & [11, 16, 50, 8, 3] & 38 \\
Motor      & 1e-3 & 64  & 16  & 6 & F & 3 & 3 & T & [187, 11] & 163 \\
SCP1       & 1e-4 & 64  & 16  & 2 & T & 4 & 2 & T & [179, 9, 179, 32, 224] & 98 \\
SCP2       & 1e-3 & 64  & 256 & 6 & T & 4 & 3 & T & [192, 48, 115, 192] & 67\\
\bottomrule
\end{tabular}
\end{table}

\begin{table}[H]
\centering
\footnotesize
\caption{Selected hyperparameters for \textbf{Wst-Mamba} (wavelet-scattering tokens, Kymatio \texttt{Scattering1D} with Morlet wavelets).}
\label{tab:hparams_wst_mamba}
\setlength{\tabcolsep}{4pt}
\begin{tabular}{lcccccccc}
\toprule
Dataset & LR & Hidden & State & Blocks & Conv & Exp & WST Lengths & Stride \\
\midrule
EigenWorms & 1e-3 & 128 & 64  & 6 & 2 & 1 & [749, 2997]           & 48  \\
Ethanol    & 1e-4 & 128 & 64  & 2 & 3 & 4 & [18, 291, 62, 48]     & 66  \\
Heartbeat  & 1e-4 & 64  & 64  & 2 & 3 & 4 & [67, 67, 40]          & 75  \\
Motor      & 1e-5 & 16  & 16  & 2 & 2 & 1 & [375, 125, 150, 187]  & 125 \\
SCP1       & 1e-5 & 128 & 256 & 2 & 4 & 2 & [179, 28]             & 122 \\
SCP2       & 1e-3 & 16  & 16  & 2 & 3 & 1 & [48, 4, 96, 48, 192]  & 141 \\
\bottomrule
\end{tabular}
\end{table}

\newpage

\end{document}